\def\figref#1{figure~\ref{#1}}
\def\secref#1{section~\ref{#1}}
\def\eqref#1{equation~\ref{#1}}
\def\1{\bm{1}}
\DeclareMathAlphabet{\mathsfit}{\encodingdefault}{\sfdefault}{m}{sl}
\SetMathAlphabet{\mathsfit}{bold}{\encodingdefault}{\sfdefault}{bx}{n}
\newcommand{\Var}{\mathrm{Var}}
\newtheorem{theorem}{Theorem}
\newtheorem{remark}{Remark}
\newtheorem{lemma}{Lemma}
\newcommand{\ie}{\textit{i}.\textit{e}.}
\newcommand{\eg}{\textit{e}.\textit{g}.}
\newcommand{\norm}[1]{\left\lVert #1 \right\rVert}
\def\secref#1{Section~\ref{#1}}
\def\appref#1{Appendix~\ref{#1}}
\def\figref#1{Figure~\ref{#1}}
\def\tabref#1{Table~\ref{#1}}
\def\eqref#1{Eq.~\ref{#1}}
\titlespacing*{\section}{0pt}{1.75ex plus .35ex minus .35ex}{0.875ex}
\titlespacing*{\subsection}{0pt}{1.4ex plus .35ex minus .35ex}{0.7ex}
\titlespacing*{\subsubsection}{0pt}{1.05ex plus .35ex minus .35ex}{0.525ex}
\titlespacing*{\paragraph}{0pt}{0.4375ex plus .175ex minus .175ex}{0.875em}
\newcolumntype{L}[1]{>{\raggedright\arraybackslash}p{#1}} 
\title{Unlocking Noise-Resistant Vision: Key Architectural Secrets for Robust Models}
\author{Bum Jun Kim, Makoto Kawano, Yusuke Iwasawa \& Yutaka Matsuo \\
Department of Engineering, The University of Tokyo \\
\texttt{\{bumjun.kim,kawano,iwasawa,matsuo\}@weblab.t.u-tokyo.ac.jp} \\
}
\begin{document}

\maketitle

\begin{abstract}
	While the robustness of vision models is often measured, their dependence on specific architectural design choices is rarely dissected. We investigate why certain vision architectures are inherently more robust to additive Gaussian noise and convert these empirical insights into simple, actionable design rules. Specifically, we performed extensive evaluations on 1,174 pretrained vision models, empirically identifying four consistent design patterns for improved robustness against Gaussian noise: larger stem kernels, smaller input resolutions, average pooling, and supervised vision transformers (ViTs) rather than CLIP ViTs, which yield up to 506 rank improvements and 21.6\%p accuracy gains. We then develop a theoretical analysis that explains these findings, converting observed correlations into causal mechanisms. First, we prove that low-pass stem kernels attenuate noise with a gain that decreases quadratically with kernel size and that anti-aliased downsampling reduces noise energy roughly in proportion to the square of the downsampling factor. Second, we demonstrate that average pooling is unbiased and suppresses noise in proportion to the pooling window area, whereas max pooling incurs a positive bias that grows slowly with window size and yields a relatively higher mean-squared error and greater worst-case sensitivity. Third, we reveal and explain the vulnerability of CLIP ViTs via a pixel-space Lipschitz bound: The smaller normalization standard deviations used in CLIP preprocessing amplify worst-case sensitivity by up to 1.91 times relative to the Inception-style preprocessing common in supervised ViTs. Our results collectively disentangle robustness into interpretable modules, provide a theory that explains the observed trends, and build practical, plug-and-play guidelines for designing vision models more robust against Gaussian noise.
\end{abstract}

\section{Introduction}
Vision models, implemented with deep neural networks, are now deployed across numerous fields, even in safety-critical applications ranging from medical imaging to autonomous driving. Their remarkable accuracy, however, conceals an uncomfortable fact: Performance can deteriorate when test images deviate---even slightly---from the training distribution \citep{DBLP:conf/iclr/HendrycksD19}. Even light Gaussian noise can trigger misclassifications, and in autonomous vehicles, such brittleness can lead to life-threatening failures.

Recent studies have empirically discovered that the architectural design of deep neural networks strongly shapes their robustness to common image transformations. Specifically, \citet{DBLP:conf/aaai/PaulC22,DBLP:conf/nips/BaiMYX21,DBLP:conf/nips/NaseerRKHKY21} observed that vision transformers (ViTs) often degrade less than previous convolutional networks, such as residual networks (ResNets), under various corruptions. Although promising results with ViTs have been reported, such studies typically treat each architecture as a whole, leaving unanswered which specific internal choices contribute to gains in robustness.

In this study, we dissect the robustness of vision models under Gaussian noise, showing that specific micro-architectural choices are key factors in determining robustness. We performed extensive experiments on available vision models from the \texttt{timm} library \citep{rw2019timm}, as well as controlled experiments; our empirical meta-analysis compares architectures pairwise within the vision models, which enables us to isolate the effect of each micro-architectural factor, thereby revealing four interesting design patterns in architectures that improve robustness against Gaussian noise:
\begin{itemize}[leftmargin=*,noitemsep,topsep=0pt,partopsep=0pt]
	\item \textbf{Larger} stem kernels, such as larger patch sizes in ViTs, rather than \textbf{smaller} ones,
	\item \textbf{Smaller} input resolutions, such as $224^2$, rather than \textbf{larger} ones, such as $384^2$,
	\item \textbf{Average} pooling, rather than \textbf{max} pooling, and
	\item \textbf{Supervised} learning ViTs, rather than \textbf{CLIP} ViTs.
\end{itemize}

Extending these empirical observations, we also derive several theoretical results that account for the differences in these choices. Specifically, we prove that noise gain decays quadratically with the stem kernel size and that downsampling after anti-alias filtering yields analogous gains (\secref{sec:ksrs}). Then we analyze Gaussian-noise error formulas for both pooling operators, showing that average pooling is unbiased with decreased variance, whereas max pooling incurs a positive bias and a higher mean-squared error (\secref{sec:pool}). Finally, we demonstrate that the vulnerability of CLIP ViTs is primarily caused by the choice of mean-std normalization, whose effect is proven with Lipschitz bounds (\secref{sec:clip}).

\section{Related Work}
\paragraph{Robustness literature and positioning of this study.} Robustness to common corruptions is typically evaluated using ImageNet-C \citep{DBLP:conf/iclr/HendrycksD19}. A consistent observation across studies is that ViTs often degrade less than CNNs do under such corruptions \citep{DBLP:conf/aaai/PaulC22,DBLP:conf/nips/BaiMYX21,DBLP:conf/nips/NaseerRKHKY21}. However, most prior comparisons treat architectures as monolithic families or vary training recipes, making it hard to isolate which micro-architectural choices drive robustness. Furthermore, multiple corruptions, such as brightness changes and blur, are mixed in. In contrast to these complex corruptions and architectures, we design a systematic evaluation protocol to isolate the effect of each micro-architectural factor. Furthermore, we select Gaussian noise due to its approximation of aggregate perturbations by the central limit theorem and its prevalence in real-world imaging, such as sensor readout and thermal noise. To this end, our experiments disentangle four architectural choices across pretrained models and controlled settings, enabling clean attribution. Our findings align with prior results on robustness studies \citep{DBLP:conf/aaai/PaulC22,DBLP:conf/icml/BoureauPL10} and add causal, quantitative explanations. The parts below review related work that corresponds to the empirical design patterns we identified for enhancing Gaussian noise robustness.

\paragraph{Anti-aliasing, kernels, and resolution.} Anti-aliased downsampling is known to reduce high-frequency sensitivity and improve stability \citep{DBLP:conf/icml/Zhang19,DBLP:journals/ijcv/ZouXYLL23}, and analogous ideas have been explored for ViTs \citep{DBLP:conf/nips/QianSZLJ21}. Complementing these studies, we provide explicit scaling laws: The output noise energy decays quadratically with the stem kernel size and the anti-aliased downsampling factor, explaining why larger stem kernels and smaller input resolutions improve robustness.

\paragraph{Pooling under additive noise.} Classical analysis shows that average pooling is unbiased with variance reduction, whereas max pooling introduces a positive bias under Gaussian noise \citep{DBLP:conf/icml/BoureauPL10}; recent studies further clarify when max pooling aids invariance despite worse noise behavior \citep{DBLP:journals/tmlr/MatobaDF23}. We extend this line and empirically verify the predicted advantage of average pooling over max pooling across multiple datasets.

\paragraph{Normalization, CLIP preprocessing, and Lipschitz sensitivity.} Vision models employ specific per-channel mean-std preprocessing, which, according to Lipschitz-based robustness theory \citep{DBLP:conf/nips/VirmauxS18,DBLP:journals/ml/GoukFPC21,DBLP:conf/nips/TsuzukuSS18}, directly rescales pixel-space sensitivity. We make this connection explicit: Smaller channel standard deviations enlarge the end-to-end Lipschitz bound, predicting greater worst-case and mean-squared sensitivity to additive noise.

\section{Noise Attenuation by Low-Pass Kernels}
\label{sec:ksrs}
ViTs have various configurations \citep{DBLP:conf/iclr/DosovitskiyB0WZ21}, such as the size of each patch in the patch embedding and the input image size in pixels, which we refer to as the input resolution. Even within the same ViT architecture, various pretrained weights are available: They were trained with different recipes, the hyperparameter combinations used in training. For example, \texttt{vit\_base\_patch16\_224.augreg\_in1k} indicates the ViT with a model size of base, a patch size of 16, a resolution of $224^2$, and pretrained weights obtained using a training recipe of AugReg \citep{DBLP:journals/tmlr/SteinerKZWUB22} and a dataset of ImageNet-1K \citep{DBLP:conf/cvpr/DengDSLL009}. Although plenty of variations in its configuration are allowed, the effect of each choice on robustness against Gaussian noise has not been clearly studied, making it difficult for practitioners to choose which one to use.

To study the effect of each architectural factor in a ViT on robustness, we performed an extensive evaluation using pretrained ViTs with various configurations. For example, by comparing \texttt{vit\_base\_patch16\_224.augreg\_in1k} and \texttt{vit\_base\_patch32\_224.augreg\_in1k}, we can study the effect of the choice of patch sizes of 16 and 32 on performance because all other conditions remained the same. In this section, we first present empirical observations from different configurations, and then we examine the corresponding properties.

\begin{table}[t!]
	\caption{Top-1 accuracy (\%) on the ImageNet-1K dataset before and after adding Gaussian noise to images. For the rank difference (RankDiff), more negative values indicate better robustness under noise. Models with large kernels and small resolutions consistently showed improved robustness.}
	\label{tab:timm}
	\begin{center}
		\resizebox{1.0\textwidth}{!}{
			\begin{tabular}{l|ccc}
				\toprule
				\textbf{Pretrained Model                                                         } & \textbf{Top-1 $\rightarrow$ w/ Noise                 } & \textbf{Rank $\rightarrow$ w/ Noise                 } & \textbf{RankDiff} \\
				\midrule
				\texttt{vit\_small\_patch16\_224.augreg\_in1k}                                     & 78.84                    $\rightarrow$ 59.22           & 885                      $\rightarrow$ 547            & -338              \\
				\texttt{vit\_small\_patch16\_384.augreg\_in1k}                                     & 81.12                    $\rightarrow$ 56.59           & 673                      $\rightarrow$ 613            & -60               \\
				\texttt{vit\_base\_patch16\_224.augreg\_in1k}                                      & 79.15                    $\rightarrow$ 62.21           & 862                      $\rightarrow$ 487            & -375              \\
				\texttt{vit\_base\_patch16\_384.augreg\_in1k}                                      & 81.10                    $\rightarrow$ 60.23           & 676                      $\rightarrow$ 524            & -152              \\
				\texttt{vit\_base\_patch32\_224.augreg\_in1k}                                      & 74.90                    $\rightarrow$ 58.44           & 1075                     $\rightarrow$ 569            & -506              \\
				\texttt{vit\_base\_patch32\_384.augreg\_in1k}                                      & 78.75                    $\rightarrow$ 59.65           & 893                      $\rightarrow$ 539            & -354              \\
				\midrule
				\texttt{vit\_tiny\_patch16\_224.augreg\_in21k\_ft\_in1k}                           & 75.46                    $\rightarrow$ 40.34           & 1060                     $\rightarrow$ 949            & -111              \\
				\texttt{vit\_tiny\_patch16\_384.augreg\_in21k\_ft\_in1k}                           & 78.42                    $\rightarrow$ 30.50           & 921                      $\rightarrow$ 1078           & +157              \\
				\texttt{vit\_small\_patch16\_224.augreg\_in21k\_ft\_in1k}                          & 81.39                    $\rightarrow$ 62.43           & 644                      $\rightarrow$ 479            & -165              \\
				\texttt{vit\_small\_patch16\_384.augreg\_in21k\_ft\_in1k}                          & 83.80                    $\rightarrow$ 62.25           & 349                      $\rightarrow$ 484            & +135              \\
				\texttt{vit\_small\_patch32\_224.augreg\_in21k\_ft\_in1k}                          & 76.00                    $\rightarrow$ 57.14           & 1044                     $\rightarrow$ 601            & -443              \\
				\texttt{vit\_small\_patch32\_384.augreg\_in21k\_ft\_in1k}                          & 80.48                    $\rightarrow$ 57.33           & 740                      $\rightarrow$ 596            & -144              \\
				\texttt{vit\_base\_patch8\_224.augreg\_in21k\_ft\_in1k}                            & 85.80                    $\rightarrow$ 73.50           & 145                      $\rightarrow$ 118            & -27               \\
				\texttt{vit\_base\_patch16\_224.augreg\_in21k\_ft\_in1k}                           & 84.53                    $\rightarrow$ 71.19           & 257                      $\rightarrow$ 192            & -65               \\
				\texttt{vit\_base\_patch16\_384.augreg\_in21k\_ft\_in1k}                           & 85.99                    $\rightarrow$ 70.89           & 129                      $\rightarrow$ 208            & +79               \\
				\texttt{vit\_base\_patch32\_224.augreg\_in21k\_ft\_in1k}                           & 80.71                    $\rightarrow$ 65.31           & 719                      $\rightarrow$ 392            & -327              \\
				\texttt{vit\_base\_patch32\_384.augreg\_in21k\_ft\_in1k}                           & 83.35                    $\rightarrow$ 63.72           & 412                      $\rightarrow$ 437            & +25               \\
				\texttt{vit\_large\_patch16\_224.augreg\_in21k\_ft\_in1k}                          & 85.84                    $\rightarrow$ 76.62           & 141                      $\rightarrow$ 55             & -86               \\
				\texttt{vit\_large\_patch16\_384.augreg\_in21k\_ft\_in1k}                          & 87.08                    $\rightarrow$ 76.23           & 59                       $\rightarrow$ 61             & +2                \\
				\midrule
				\texttt{vit\_base\_patch16\_224.orig\_in21k\_ft\_in1k}                             & 81.79                    $\rightarrow$ 60.91           & 603                      $\rightarrow$ 513            & -90               \\
				\texttt{vit\_base\_patch16\_384.orig\_in21k\_ft\_in1k}                             & 84.20                    $\rightarrow$ 54.91           & 302                      $\rightarrow$ 657            & +355              \\
				\midrule
				\texttt{vit\_base\_patch8\_224.augreg2\_in21k\_ft\_in1k}                           & 86.22                    $\rightarrow$ 76.09           & 109                      $\rightarrow$ 67             & -42               \\
				\texttt{vit\_base\_patch16\_224.augreg2\_in21k\_ft\_in1k}                          & 85.10                    $\rightarrow$ 74.50           & 203                      $\rightarrow$ 96             & -107              \\
				\midrule
				\texttt{vit\_base\_patch16\_224.sam\_in1k}                                         & 80.24                    $\rightarrow$ 57.13           & 771                      $\rightarrow$ 602            & -169              \\
				\texttt{vit\_base\_patch32\_224.sam\_in1k}                                         & 73.69                    $\rightarrow$ 51.33           & 1101                     $\rightarrow$ 748            & -353              \\
				\midrule
				\texttt{vit\_medium\_patch16\_gap\_256.sw\_in12k\_ft\_in1k}                        & 84.45                    $\rightarrow$ 73.07           & 274                      $\rightarrow$ 132            & -142              \\
				\texttt{vit\_medium\_patch16\_gap\_384.sw\_in12k\_ft\_in1k}                        & 85.54                    $\rightarrow$ 73.98           & 163                      $\rightarrow$ 106            & -57               \\
				\midrule
				\texttt{vit\_so150m\_patch16\_reg4\_gap\_256.sbb\_e250\_in12k\_ft\_in1k}           & 86.68                    $\rightarrow$ 77.54           & 81                       $\rightarrow$ 38             & -43               \\
				\texttt{vit\_so150m\_patch16\_reg4\_gap\_384.sbb\_e250\_in12k\_ft\_in1k}           & 87.37                    $\rightarrow$ 77.30           & 49                       $\rightarrow$ 44             & -5                \\
				\bottomrule
			\end{tabular}
		}
	\end{center}
\end{table}

\subsection{Empirical Observation} We used the \texttt{timm} library, which provides 1,174 pretrained vision models. For all pretrained models, we evaluated the top-1 accuracy (\%) on the standard ImageNet-1K dataset. Then we injected Gaussian noise into the images on the ImageNet-1K dataset and measured the top-1 accuracy. Although it is natural to observe a linear accuracy drop after applying a specific corruption \citep{DBLP:conf/icml/RechtRSS19,DBLP:conf/iclr/HendrycksD19}, a model with robustness would show a relatively smaller drop in top-1 accuracy. Motivated by this behavior, we identified robust models by observing relative ranking among the 1,174 models: When a model ranked 50th becomes 20th after adding Gaussian noise, we say that it demonstrates relatively stronger robustness to Gaussian noise. To investigate the model with improved rank, we computed the rank difference before and after applying Gaussian noise, where more negative values indicate better robustness. Full rationale for rank difference and technical details are available in \appref{app:rankdiff} and \appref{app:extsetup}. Full results on all models are in supplementary materials. Based on these rank differences, we compared pairs of ViTs with different configurations and investigated which architectural factors contribute to improved ranking under noise.

\tabref{tab:timm} summarizes the top-1 accuracy and ranking changes before and after injecting Gaussian noise. We found that the rank difference was lower when a ViT had 1) a larger patch size, such as 32, and 2) a smaller resolution, such as $224^2$. For example, comparing \texttt{vit\_base\_patch16\_224.augreg\_in1k} and \texttt{vit\_base\_patch32\_224.augreg\_in1k}, we observed that the model with a patch size of 32 yielded a lower rank difference than the one with a patch size of 16. We consistently observed similar behavior across multiple pretrained weights such as AugReg2, original ViTs, SAM, and others \citep{DBLP:journals/tmlr/SteinerKZWUB22,DBLP:conf/iclr/ChenHG22}. The same holds for resolution, where a model with a $224^2$ resolution exhibited a lower rank difference than one with $384^2$. Note that this observation is contrary to the common practice of scaling up resolution to improve general performance \citep{DBLP:conf/icml/TanL19}; our results indicate that this practice may increase vulnerability to Gaussian noise. These two factors were significantly more important than others, such as model size.

\begin{figure}[t!]
	\centering
	\begin{subfigure}{.5\textwidth}
		\centering
		\includegraphics[width=.99\linewidth]{./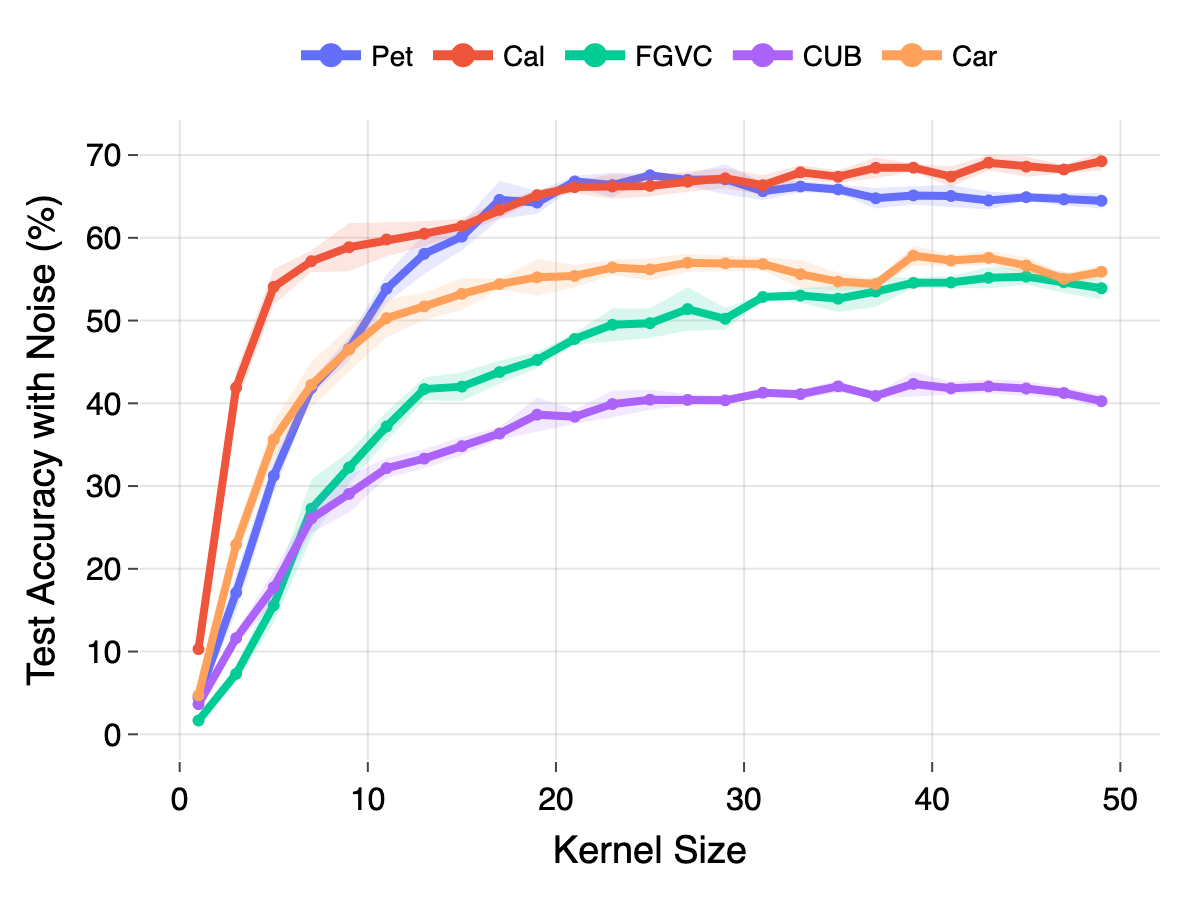}
	\end{subfigure}%
	\begin{subfigure}{.5\textwidth}
		\centering
		\includegraphics[width=.99\linewidth]{./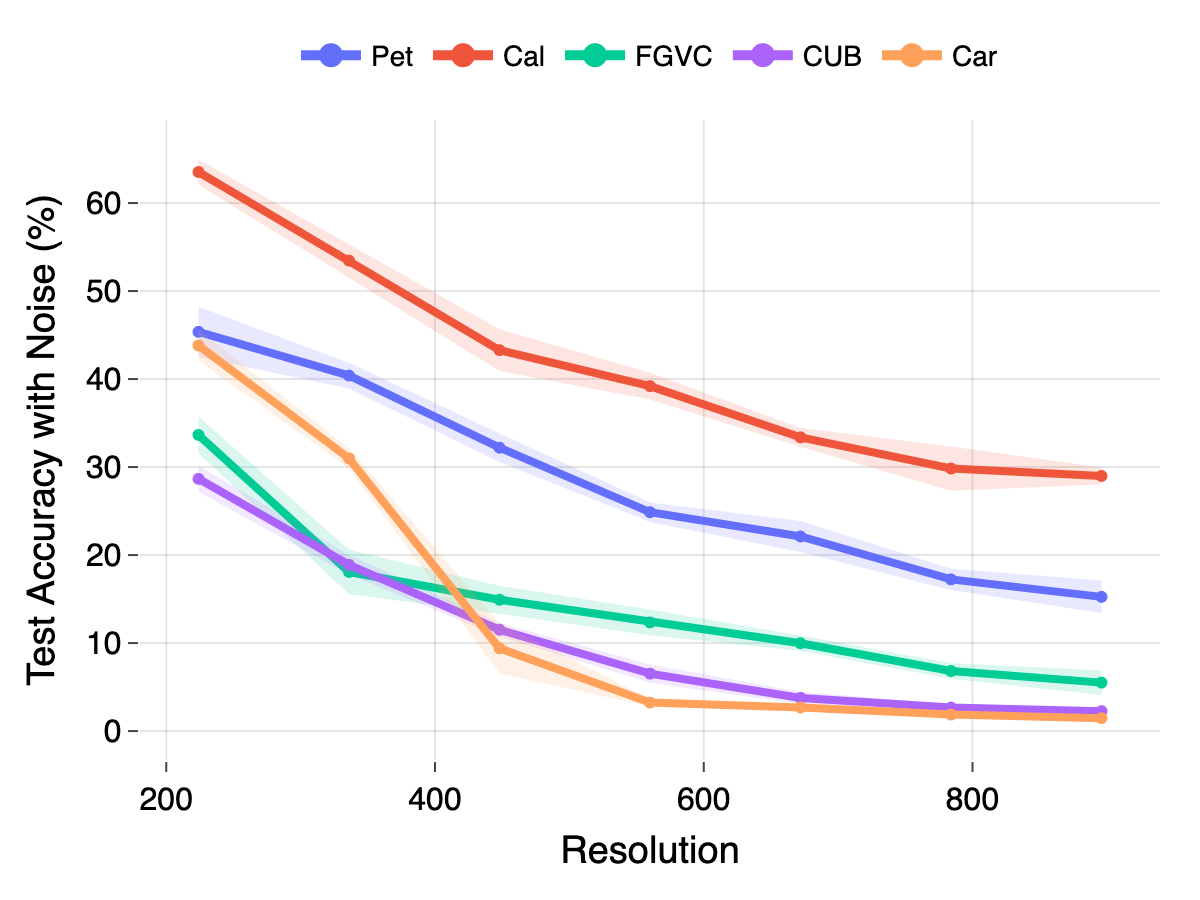}
	\end{subfigure}
	\vspace{-2.0em}
	\caption{Classification accuracy (\%) using ResNet-50 for different kernel sizes and resolutions. Larger kernels and smaller resolutions improved performance. Shaded areas represent standard deviations.}
	\label{fig:ksrs}
\end{figure}

The patch size of a ViT corresponds to the kernel size used in the patch embedding, which is referred to as the stem. Based on these observations, we investigated whether using a larger stem kernel and a smaller resolution improves robustness to Gaussian noise on another architecture, performing controlled experiments on ResNets \citep{DBLP:conf/cvpr/HeZRS16}. Specifically, we trained ResNet-50 on five datasets, including Oxford-IIIT Pet \citep{DBLP:conf/cvpr/ParkhiVZJ12}, Caltech-101 \citep{DBLP:journals/cviu/Fei-FeiFP07}, FGVC-Aircraft \citep{DBLP:journals/corr/MajiRKBV13}, Caltech-UCSD Birds-200-2011 \citep{wah2011caltech}, and Stanford Cars \citep{DBLP:conf/iccvw/Krause0DF13} datasets. Similar to the above ViT experiments, we trained ResNet in a standard recipe (\appref{app:extsetup}), obtained numerous models with different kernel sizes and resolutions, and measured classification accuracy after applying Gaussian noise.

We observed that larger kernel sizes and smaller resolutions improved classification accuracy under additive Gaussian noise (\figref{fig:ksrs}). The classification errors on noisy images tended to decrease quadratically with larger kernel sizes and smaller resolutions.

\subsection{Theoretical Analysis}
\label{sec:ksrs-theory}
Now, we prove that the noise energy decays quadratically with the stem kernel size and the resolution, or equivalently, the anti-aliased downsampling factor. Full proofs are available in \appref{app:proofs}. Throughout, $\eta\sim\mathcal N(0,\sigma^2 I)$ denotes independent and identically distributed (i.i.d.) Gaussian noise, and the per-pixel noise gain is the output noise energy normalized by the number of output pixels \citep{oppenheim1999discrete}.

\paragraph{Setup.} For a kernel size $k\ge3$, let $K_k\in\mathbb{R}^{k\times k}$ denote the linear, shift-invariant stem kernel, and $\widehat K_k$ its DFT \citep{oppenheim1999discrete}. We consider a single, mild assumption on the stem kernel:
\begin{itemize}[leftmargin=1.25em, itemsep=2pt]
	\item \textbf{(A\textsubscript{roll})} Radial low-pass envelope at scale $1/k$: There exist $\beta,\delta>0$ such that, for all frequencies $\boldsymbol\omega$,
	      \[
		      |\widehat K_k(\boldsymbol\omega)| \le \phi_k(\|\boldsymbol\omega\|),
		      \qquad \phi_k(r) \coloneqq (1+\beta k r)^{-1-\delta},
	      \]
	      where $\phi_k$ is nonincreasing in $r$. This assumption works well in practical use cases (\appref{app:simul}).
\end{itemize}

\paragraph{Per-pixel noise gain for stem kernel.} We define
\begin{equation}\label{eq:noise-gain-stem}
	\gamma(k) \coloneqq \frac{\mathbb{E} \bigl[\|K_k*\eta\|_2^2\bigr]}{\sigma^2 HW}
	\stackrel{\text{(Parseval)}}=\frac{1}{HW}\sum_{\boldsymbol\omega} |\widehat K_k(\boldsymbol\omega)|^2
	= \|K_k\|_{F}^{2},
\end{equation}
where $H$ and $W$ are the height and width. Intuitively, $\gamma(k)$ is the average squared magnitude response of the stem kernel.

\begin{theorem}[Noise attenuation for practical low-pass stem kernel]\label{thm:lowpassstems}
	Assume \textup{(A\textsubscript{roll})}. Then, there exists a constant $C>0$, independent of $k$, such that
	\[
		\gamma(k)=\frac{\mathbb{E} \bigl[\|K_k*\eta\|_2^2\bigr]}{\sigma^2 HW}
		\le \frac{C}{k^{2}}.
	\]
	Moreover, the $k^{-2}$ rate is achievable.
\end{theorem}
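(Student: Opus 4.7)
My plan is to reduce the bound on $\gamma(k)=\|K_k\|_F^2$ to a two-dimensional integral in the frequency domain, and then use the radial envelope to extract an explicit $k^{-2}$ factor via a single change of variables. First, I would invoke the Plancherel identity for discrete-time signals, which gives
\[
	\|K_k\|_F^2 \;=\; \frac{1}{(2\pi)^2}\int_{[-\pi,\pi]^2} \bigl|\widehat K_k(\boldsymbol\omega)\bigr|^2\,d\boldsymbol\omega,
\]
so the theorem reduces to controlling an integral of $|\widehat K_k|^2$ over the fundamental frequency cell. This matches (and in fact generalizes) the finite-grid identity in equation~\ref{eq:noise-gain-stem}, and avoids any dependence on the particular values of $H,W$ entering through the DFT grid.

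Next, I would apply the envelope bound from \textup{(A\textsubscript{roll})} pointwise and extend the domain of integration to all of $\mathbb{R}^2$ (valid because $\phi_k^2\ge0$):
\[
	\gamma(k) \;\le\; \frac{1}{(2\pi)^2}\int_{\mathbb{R}^2}\phi_k(\|\boldsymbol\omega\|)^2\,d\boldsymbol\omega
	\;=\; \frac{1}{2\pi}\int_0^\infty (1+\beta k r)^{-2-2\delta}\,r\,dr,
\]
after passing to polar coordinates. The substitution $u=\beta k r$ pulls the entire $k$-dependence out of the integrand, leaving the $k$-independent integral $\int_0^\infty u(1+u)^{-2-2\delta}\,du$, which I would recognize as a Beta integral and evaluate to $\tfrac{1}{2\delta(1+2\delta)}$; this is finite precisely because $\delta>0$. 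Collecting constants yields $\gamma(k)\le C/k^2$ with $C=\tfrac{1}{4\pi\beta^2\delta(1+2\delta)}$, which is independent of $k$ as required.

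For the achievability statement, I would exhibit a concrete stem realizing the rate (up to constants). The quickest route is a Cauchy--Schwarz lower bound: any kernel supported on a $k\times k$ patch with unit DC gain satisfies $1=(\sum K_k)^2\le k^2\|K_k\|_F^2$, so $\gamma(k)\ge 1/k^2$ automatically. Taking, say, the normalized box kernel $K_k=k^{-2}\mathbf{1}_{[0,k)^2}$ gives exact equality $\gamma(k)=1/k^2$, which matches the upper bound up to a multiplicative constant and therefore certifies that $k^{-2}$ is the correct scaling exponent.

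The main technical obstacle I anticipate is ensuring the chosen achievability kernel actually satisfies \textup{(A\textsubscript{roll})} with the stated polynomial envelope, since smooth kernels (Gaussians, triangle filters) have spectra that are flat to second order near the origin and the envelope $(1+\beta k r)^{-1-\delta}$ drops linearly there. I would address this either by mollifying the box so that $|\widehat K_k|$ has a mild non-smoothness at $\boldsymbol\omega=0$, or by presenting achievability in the weaker form ``there exists a family satisfying \textup{(A\textsubscript{roll})} with $\gamma(k)=\Theta(k^{-2})$,'' which together with the Cauchy--Schwarz lower bound suffices to show the exponent cannot be improved. A secondary, minor concern is the boundary contribution from extending the integration domain from $[-\pi,\pi]^2$ to $\mathbb{R}^2$, but since $\phi_k^2$ is nonnegative this step only weakens the inequality and does not affect the scaling.
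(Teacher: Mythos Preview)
Your argument is correct and follows the same core line as the paper: bound $|\widehat K_k|^2$ by the radial envelope $\phi_k^2$, pass to a radial integral, and substitute to pull out the $k^{-2}$ factor. The one genuine difference is that you work with the continuous DTFT and the exact Plancherel identity $\|K_k\|_F^2=\tfrac{1}{(2\pi)^2}\int_{[-\pi,\pi]^2}|\widehat K_k|^2$, whereas the paper stays on the finite $H\times W$ DFT grid and invokes a separate radial sum--integral comparison lemma (its equation~\eqref{eq:sum-to-int}) to replace $\tfrac{1}{HW}\sum_{\boldsymbol\omega}$ by $\int r\,g(r)\,dr$. Your route is cleaner and yields an explicit constant $C=\tfrac{1}{4\pi\beta^2\delta(1+2\delta)}$, but it tacitly reinterprets $\widehat K_k$ as the DTFT rather than the DFT used in the paper's setup; you should flag that this is harmless because \textup{(A\textsubscript{roll})} is stated ``for all frequencies $\boldsymbol\omega$'' and both transforms agree on the grid. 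On achievability, you go further than the paper's proof (which only establishes the upper bound): your Cauchy--Schwarz lower bound $\gamma(k)\ge 1/k^2$ for any unit-DC kernel on a $k\times k$ support is a nice addition, and your diagnosis of the envelope-at-the-origin issue for the box kernel is accurate and honestly stated.
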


\begin{remark}[Practical reading of Theorem~\ref{thm:lowpassstems}]
	Doubling the stem kernel size, such as the patch size from 16 to 32, quarters the output noise energy ($\approx -6$ dB).
\end{remark}

\paragraph{Per-output-pixel noise gain for anti-aliased downsampling.} For a downsampling factor $s\ge1$, we define
\begin{equation}\label{eq:Ds-def}
	D_s \coloneqq (\Downarrow_s)\circ K_{g(s)},\qquad c_1 s \le g(s) \le c_2 s,
\end{equation}
\ie, filter with $K_{g(s)}$ satisfying the same assumptions at scale $g(s)$ and then downsample by $s$. We normalize the noise gain by the number of output pixels:
\begin{equation}\label{eq:noise-gain-down}
	\gamma_{\downarrow}(s) \coloneqq \frac{\mathbb{E} \bigl[\|D_s\eta\|_2^2\bigr]}{\sigma^2 HW/s^{2}}.
\end{equation}

\begin{theorem}[Resolution-driven robustness]\label{thm:resolution}
	There exists $C'>0$ independent of $s$ such that
	\[
		\gamma_{\downarrow}(s) \le \frac{C'}{s^{2}}.
	\]
	This $s^{-2}$ rate is tight up to constants.
\end{theorem}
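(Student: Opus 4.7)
The plan is to reduce Theorem~\ref{thm:resolution} to Theorem~\ref{thm:lowpassstems} by exploiting the stationarity of filtered i.i.d.\ Gaussian noise, so that the downsampling step becomes an essentially trivial rescaling. First I would observe that because $\eta \sim \mathcal{N}(0,\sigma^2 I)$ is i.i.d., the convolved field $K_{g(s)} * \eta$ is a stationary Gaussian random field whose per-pixel variance is exactly $\sigma^2 \|K_{g(s)}\|_F^2$ by the usual variance-of-a-linear-combination identity. Thus each of the $HW$ input pixels contributes identically in expectation, matching the Parseval form in \eqref{eq:noise-gain-stem}.

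Second, since $\Downarrow_s$ merely subsamples the filtered field on a sublattice of $HW/s^2$ pixels, linearity of expectation gives
\[
    \mathbb{E}\bigl[\|D_s \eta\|_2^2\bigr] \;=\; \frac{HW}{s^2}\cdot \sigma^2 \|K_{g(s)}\|_F^{2}.
\]
Substituting into \eqref{eq:noise-gain-down} collapses the per-output-pixel gain to
\[
    \gamma_{\downarrow}(s) \;=\; \|K_{g(s)}\|_F^{2} \;=\; \gamma(g(s)),
\]
\ie, anti-aliased downsampling with factor $s$ has exactly the same per-output-pixel noise gain as a stem kernel of size $g(s)$. Applying Theorem~\ref{thm:lowpassstems} at scale $g(s)$ and using $g(s)\ge c_1 s$ then yields
\[
    \gamma_{\downarrow}(s) \;\le\; \frac{C}{g(s)^{2}} \;\le\; \frac{C/c_1^{2}}{s^{2}},
\]
proving the upper bound with $C' = C/c_1^{2}$. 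For the matching lower bound, I would invoke the achievability clause of Theorem~\ref{thm:lowpassstems}: there exists a family of admissible kernels with $\gamma(k)\gtrsim k^{-2}$, and combining this with $g(s) \le c_2 s$ gives $\gamma_{\downarrow}(s) \gtrsim s^{-2}$, establishing tightness up to constants.

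The main obstacle, which I view as minor, is justifying that the subsampled squared-norm equals the uniform per-pixel variance times $HW/s^2$ without a boundary correction. This relies on shift-invariance of the convolution, which is exact under circular boundary conditions and only affected at an $O(g(s)/H + g(s)/W)$ fraction of pixels under zero padding; since $g(s) \le c_2 s \ll \min(H,W)$ in the regime of interest, boundary effects only alter lower-order constants and do not affect the $s^{-2}$ scaling. Everything else is bookkeeping, so Theorem~\ref{thm:resolution} follows from Theorem~\ref{thm:lowpassstems} by a one-line reduction once stationarity is noted.
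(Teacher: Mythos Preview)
Your proposal is correct and matches the paper's proof essentially step for step: the paper isolates your stationarity-and-subsampling argument as a separate lemma establishing $\gamma_{\downarrow}(s)=\|K_{g(s)}\|_F^2$, then applies Theorem~\ref{thm:lowpassstems} with $k=g(s)$ and $g(s)\ge c_1 s$ exactly as you do. Your additional remarks on tightness via $g(s)\le c_2 s$ and on boundary effects are consistent with the paper's framing and in fact spell out details the paper leaves implicit.
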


\begin{remark}[Practical reading of Theorem~\ref{thm:resolution}]
	Resizing $384^2$ to $224^2$ corresponds to $s\approx 1.71$ and yields roughly $s^{-2} \approx 0.34$ of the original noise energy per output pixel ($\approx -4.7$ dB).
\end{remark}

\section{Choice on Pooling}
\label{sec:pool}

\subsection{Empirical Observation} Extending the above analysis, we probed the effect of choosing specific architectural types of ResNets on robustness. Specifically, ResNet has several types, including ResNet-\{C, D, T, S\} \citep{DBLP:conf/cvpr/HeZ0ZXL19,DBLP:journals/corr/abs-2110-00476,DBLP:journals/jmlr/GuoHHLLLSWXZZ0Z20}, although the effects of these choices and their underlying mechanisms on robustness have been rarely studied. Here, we trained the four ResNets on the five datasets mentioned above and compared their classification accuracy after applying Gaussian noise (\tabref{tab:type}).

\begin{table}[t!]
	\caption{Classification accuracy (\%) with different choices of ResNet type. The numbers in the parentheses represent standard deviations on the five runs with different random seeds.}
	\label{tab:type}
	\begin{center}
		\begin{tabular}{ll|cccc}
			\toprule
			\textbf{Dataset                          } & \textbf{Model              } & \textbf{ResNet-50-T } & \textbf{ResNet-50-D } & \textbf{ResNet-50-C } & \textbf{ResNet-50-S } \\
			\midrule
			\multirow{2}{*}{Oxford-IIIT Pet}           & Val. Acc. w/ Noise           & 39.1 (11.1)           & 37.9 (9.6)            & 34.9 (11.4)           & 24.3 (3.3)            \\
			                                           & Test Acc. w/ Noise           & 38.1 (11.3)           & 36.1 (10.3)           & 34.0 (10.4)           & 22.9 (2.4)            \\
			\midrule
			\multirow{2}{*}{Caltech-101}               & Val. Acc. w/ Noise           & 62.3 (1.4)            & 61.2 (3.0)            & 58.8 (1.1)            & 50.9 (3.2)            \\
			                                           & Test Acc. w/ Noise           & 59.7 (1.1)            & 59.1 (2.8)            & 57.8 (1.0)            & 49.5 (2.7)            \\
			\midrule
			\multirow{2}{*}{FGVC-Aircraft}             & Val. Acc. w/ Noise           & 27.8 (1.6)            & 27.3 (2.4)            & 23.9 (1.9)            & 4.7  (0.9)            \\
			                                           & Test Acc. w/ Noise           & 29.9 (1.1)            & 30.4 (1.6)            & 26.1 (2.1)            & 5.5  (0.8)            \\
			\midrule
			\multirow{2}{*}{\shortstack[l]{Caltech-UCSD                                                                                                                               \\ Birds-200-2011}} & Val. Acc. w/ Noise   & 27.6 (2.0)  & 28.8 (0.8)  & 26.3 (1.3)  & 13.9 (0.6)  \\
			                                           & Test Acc. w/ Noise           & 26.3 (2.0)            & 27.7 (0.6)            & 25.2 (1.7)            & 13.7 (1.1)            \\
			\midrule
			\multirow{2}{*}{Stanford Cars}             & Val. Acc. w/ Noise           & 56.9 (2.3)            & 55.2 (2.8)            & 41.6 (2.3)            & 29.2 (1.9)            \\
			                                           & Test Acc. w/ Noise           & 55.0 (1.9)            & 53.2 (2.7)            & 40.5 (2.3)            & 28.5 (2.0)            \\
			\bottomrule
		\end{tabular}
	\end{center}
\end{table}

Overall, the T and D types of ResNet demonstrated robust results against Gaussian noise, followed by the C and S types of ResNet. While there are several different factors among the four ResNets (\appref{app:extsetup}), the core difference is the pooling in downsampling: the T and D types of ResNet adopt average pooling with convolution in downsampling, whereas the C and S types of ResNet adopt strided $1 \times 1$ convolution in downsampling, which is equivalent to nearest-neighbor pooling followed by a $1 \times 1$ convolution.

\begin{table}[t!]
	\caption{Classification accuracy (\%) comparing different poolings. The largest gain came from AvgPool.}
	\label{tab:ap}
	\begin{center}
		\begin{tabular}{ll|ccc}
			\toprule
			\textbf{Dataset                          } & \textbf{Model              } & \textbf{MaxPool                             } & \textbf{NNPool     } & \textbf{AvgPool    } \\
			\midrule
			\multirow{2}{*}{Oxford-IIIT Pet}           & Val. Acc. w/ Noise           & 42.0                          (1.1)           & 44.2 (2.8)           & 50.2 (1.9)           \\
			                                           & Test Acc. w/ Noise           & 41.8                          (0.9)           & 42.3 (3.2)           & 49.3 (1.8)           \\
			\midrule
			\multirow{2}{*}{Caltech-101}               & Val. Acc. w/ Noise           & 59.5                          (1.0)           & 58.3 (1.1)           & 62.7 (1.8)           \\
			                                           & Test Acc. w/ Noise           & 57.2                          (1.3)           & 56.7 (1.1)           & 60.8 (1.9)           \\
			\midrule
			\multirow{2}{*}{FGVC-Aircraft}             & Val. Acc. w/ Noise           & 24.2                          (3.5)           & 22.8 (1.9)           & 41.0 (2.9)           \\
			                                           & Test Acc. w/ Noise           & 27.3                          (3.6)           & 24.7 (1.8)           & 43.1 (3.1)           \\
			\midrule
			\multirow{2}{*}{\shortstack[l]{Caltech-UCSD                                                                                                                             \\ Birds-200-2011}} & Val. Acc. w/ Noise   & 26.9                          (1.8) & 27.5 (3.0) & 28.8 (1.8) \\
			                                           & Test Acc. w/ Noise           & 26.1                          (1.7)           & 25.6 (2.6)           & 26.8 (1.2)           \\
			\midrule
			\multirow{2}{*}{Stanford Cars}             & Val. Acc. w/ Noise           & 43.3                          (3.4)           & 49.1 (1.9)           & 52.1 (1.8)           \\
			                                           & Test Acc. w/ Noise           & 42.2                          (2.8)           & 46.9 (1.3)           & 51.2 (2.1)           \\
			\bottomrule
		\end{tabular}
	\end{center}
\end{table}

We further explored the effect of pooling choice on robustness to Gaussian noise. Using ResNet-50, we compared the original one, which uses max pooling in the stem, and modified ResNets that adopt nearest-neighbor pooling or average pooling in the stem (\tabref{tab:ap}). ResNets with average pooling consistently yielded robust performance against Gaussian noise among the three setups in pooling. More results for other architectures under controlled conditions are available in \appref{app:additional}.

\subsection{Theoretical Analysis}
\label{sec:pool-theory}
We explain why average pooling is more robust than max pooling under i.i.d. additive Gaussian noise.

\paragraph{Setup.} Consider a pooling window of size $k\ge 2$ in a single channel. Let the clean activations be $S=(S_1,\dots,S_k)\in\mathbb{R}^k$ and the observation be $S+\eta$ with i.i.d. noise $\eta\sim\mathcal N(0,\sigma^2 I_k)$. We define
\[
	X_{\mathrm{avg}} \coloneqq \tfrac1k\sum_{i=1}^k(S_i+\eta_i),\qquad
	X_{\mathrm{max}} \coloneqq \max_{1\le i\le k}(S_i+\eta_i),
\]
their clean counterparts $S_{\mathrm{avg}} \coloneqq \tfrac1k\sum_i S_i$, $S_{\mathrm{max}} \coloneqq \max_i S_i$, and the errors $\delta_{\mathrm{avg}} \coloneqq X_{\mathrm{avg}}-S_{\mathrm{avg}}$, $\delta_{\mathrm{max}} \coloneqq X_{\mathrm{max}}-S_{\mathrm{max}}$. Let $\Delta \coloneqq S_{(1)}-S_{(2)}\ge 0$ be the gap between the largest and second-largest entries. We also denote $T_{\mathrm{avg}}(v)\coloneqq\tfrac{1}{k}\sum_{i=1}^{k} v_i$, $T_{\max}(v) \coloneqq \max_{1\le i\le k} v_i$, and $\|T\|_{\ell_2\to\ell_2}$ for $\ell_2$-Lipschitz constant.

\begin{theorem}[Average and max poolings under Gaussian noise]
	\label{thm:pooling}
	For any $S\in\mathbb{R}^k$ and $\sigma>0$, we have
	\begin{enumerate}[label=(\roman*), leftmargin=1.4em, itemsep=3pt]
		\item \textbf{Average pooling} is unbiased and reduces variance proportionally to the window area:
		      $\mathbb E[\delta_{\mathrm{avg}}]=0, \Var[\delta_{\mathrm{avg}}]=\sigma^2/k.$
		\item \textbf{Max pooling} incurs a positive noise bias and admits the following mean-squared error (MSE) controls:
		      \begin{align*}
			       & \text{(Bias)}\quad
			      \mathbb E[\delta_{\mathrm{max}}]
			      = \mathbb E[\max_i(S_i+\eta_i)]-\max_i S_i \ge 0,             \\
			       & \text{(Uniform-signal case)}\quad
			      (S_1=\cdots=S_k):\
			      \delta_{\mathrm{max}}=\sigma M_k,\
			      \mathbb E[\delta_{\mathrm{max}}^2]=\sigma^2 \mathbb E[M_k^2], \\
			       & \text{(General case)}\quad
			      |\delta_{\mathrm{max}}|\le \|\eta\|_\infty \Rightarrow\
			      \mathbb E[\delta_{\mathrm{max}}^2]\le \sigma^2 \mathbb E[A_k^2],
		      \end{align*}
		      where $M_k \coloneqq \max_{1\le i\le k} Z_i$ and $A_k \coloneqq \max_{1\le i\le k}|Z_i|$ with $Z_i\stackrel{\mathrm{i.i.d.}}{\sim}\mathcal N(0,1)$. In particular, $\mathbb E[A_k^2]\le 2\log(2k)+2$, so $\mathbb E[\delta_{\mathrm{max}}^2]\le \sigma^2\big(2\log(2k)+2\big)$.
		\item \textbf{Adversarial worst-case sensitivity.} For any perturbation $n\in\mathbb R^k$, $|\tfrac1k\sum_i n_i |\le \|n\|_2/\sqrt{k}$, so $\|T_{\mathrm{avg}}\|_{\ell_2\to\ell_2}=k^{-1/2}$; and $|\max_i a_i-\max_i b_i|\le \|a-b\|_\infty\le \|a-b\|_2$, so $\|T_{\mathrm{max}}\|_{\ell_2\to\ell_2}\le 1$.
		\item \textbf{Large-gap regime.} For $z\coloneqq \Delta/\sigma$, one has $\lim_{z\to\infty}\mathbb E[\delta_{\mathrm{max}}^2]=\sigma^2$; when the top index never switches under noise, max pooling is equivalent to reading a single noisy entry.
	\end{enumerate}
\end{theorem}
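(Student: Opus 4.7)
The plan is to prove the four parts via elementary probabilistic and functional-analytic tools: a direct computation for (i), convexity / Lipschitz properties combined with Gaussian moment bounds for (ii), norm comparisons for (iii), and a union-bound conditioning argument for (iv). Nothing beyond standard Gaussian concentration is needed, and each part can be handled independently.

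For (i), write $\delta_{\mathrm{avg}}=k^{-1}\sum_{i=1}^{k}\eta_i$: linearity of expectation gives $\mathbb{E}[\delta_{\mathrm{avg}}]=0$, and independence of the $\eta_i$ yields $\mathrm{Var}[\delta_{\mathrm{avg}}]=\sigma^{2}/k$. For (iii), the average bound is immediate from Cauchy--Schwarz applied to $\langle n,\mathbf{1}/k\rangle$, with equality witnessed by $n\propto\mathbf{1}$; the max bound uses the standard 1-Lipschitz property $|\max_i a_i-\max_i b_i|\le\|a-b\|_\infty$ followed by $\|\cdot\|_\infty\le\|\cdot\|_2$, with the bound of $1$ attained at $n=e_1$.

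For (ii), the non-negative bias follows by picking any $i^{*}\in\argmax_i S_i$, observing $\max_j(S_j+\eta_j)\ge S_{i^{*}}+\eta_{i^{*}}$ pointwise, and taking expectations together with $\mathbb{E}[\eta_{i^{*}}]=0$; equivalently, Jensen's inequality applied to the convex max functional. The uniform-signal identity is a direct substitution after subtracting the common constant: $\delta_{\mathrm{max}}=\max_i\eta_i=\sigma M_k$. For the general case, the $\ell_\infty$-Lipschitz property of $\max$ gives $|\delta_{\mathrm{max}}|\le\|\eta\|_\infty=\sigma A_k$, hence $\mathbb{E}[\delta_{\mathrm{max}}^{2}]\le\sigma^{2}\mathbb{E}[A_k^{2}]$. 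To bound $\mathbb{E}[A_k^{2}]$, I would use the sub-Gaussian tail $\Pr[Z_1^{2}>t]\le 2\exp(-t/2)$ together with a union bound over $k$ coordinates, then split the layer-cake integral $\mathbb{E}[A_k^{2}]=\int_0^\infty\Pr[A_k^{2}>t]\,dt$ at the crossover $T=2\log(2k)$: the truncated part contributes $T$ and the tail contributes $\int_T^\infty 2k\exp(-t/2)\,dt=4k\exp(-T/2)=2$, giving the advertised $2\log(2k)+2$.

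For (iv), I would condition on the event $E=\{S_{(1)}+\eta_{i^{*}}\ge S_j+\eta_j\text{ for all }j\}$ that the top index does not switch under noise. On $E$, $\delta_{\mathrm{max}}=\eta_{i^{*}}\sim\mathcal{N}(0,\sigma^{2})$, so one writes $\mathbb{E}[\eta_{i^{*}}^{2}\mathbf{1}_E]=\sigma^{2}-\mathbb{E}[\eta_{i^{*}}^{2}\mathbf{1}_{E^c}]$, with the correction controlled by Cauchy--Schwarz using $\mathbb{E}[\eta_{i^{*}}^{4}]=3\sigma^{4}$. A union bound with the Gaussian tail of $\eta_j-\eta_{i^{*}}\sim\mathcal{N}(0,2\sigma^{2})$ shows $\Pr[E^c]\le(k-1)\exp(-z^{2}/4)\to 0$, and on $E^c$ the general bound $\delta_{\mathrm{max}}^{2}\le\|\eta\|_\infty^{2}=\sigma^2 A_k^2$ (whose second and fourth moments are finite by the same tail integration as in (ii)) combined with another Cauchy--Schwarz makes the complementary contribution vanish, giving the limit exactly $\sigma^{2}$. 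The hardest step I anticipate is making sure the constants in (ii) and (iv) come out sharp: the choice $T=2\log(2k)$ is essential to recover the exact $2\log(2k)+2$, and the $E^c$ contribution in (iv) is delicate because $|\delta_{\mathrm{max}}|$ is not uniformly bounded once switching is allowed, so its control genuinely hinges on having finite higher Gaussian moments of $A_k$ from part (ii).
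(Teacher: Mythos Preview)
Your proposal is correct and follows essentially the same approach as the paper's proof: both use the same pointwise lower bound $X_{\max}\ge S_{i^\star}+\eta_{i^\star}$ for the bias, the $\ell_\infty$-Lipschitz property of $\max$ for the general-case control, the same union-bound-plus-layer-cake argument with the identical split point $2\log(2k)$ for the $\mathbb{E}[A_k^2]$ bound, and the same switch-event conditioning with the $(k-1)e^{-z^2/4}$ union bound for part (iv). The only cosmetic differences are that the paper parametrizes the layer-cake integral as $\int 2t\,\Pr(A_k\ge t)\,dt$ rather than $\int \Pr(A_k^2>t)\,dt$, and invokes dominated convergence in (iv) where you spell out the Cauchy--Schwarz decomposition explicitly; these are equivalent.
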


\begin{remark}[Practical reading of Theorem~\ref{thm:pooling}]
	Average pooling is unbiased and cuts Gaussian noise variance by a factor $k$ (\eg, a $2\times2$ window gives $-6$ dB). Max pooling is positively biased, and its MSE grows at most logarithmically with the window size, while also having a greater worst-case Lipschitz gain, clearly worse than average pooling.
\end{remark}

\begin{remark}[Average and nearest-neighbor poolings]
	Selecting a fixed element in the window, such as the nearest-neighbor pooling, is unbiased with an MSE $\sigma^2$. Hence, average pooling is strictly more robust to additive Gaussian noise than nearest-neighbor pooling by a factor of $k$ in MSE.
\end{remark}

\section{Why are CLIP models vulnerable?}
\label{sec:clip}

\subsection{Empirical Observation}
Although the original ViT \citep{DBLP:conf/iclr/DosovitskiyB0WZ21} was trained with supervised learning, the CLIP study \citep{DBLP:conf/icml/RadfordKHRGASAM21} trained ViTs with self-supervised learning and successfully achieved competitive performance. Currently, available pretrained weights for ViTs are largely divided into CLIP ViTs and others trained with supervised learning; we refer to the latter as supervised ViTs. The training methods and datasets differ between these two sources of ViTs, yielding different pretrained weights, while they have almost the same architecture with only a single minor difference. Nevertheless, we observed that CLIP ViTs exhibited significant vulnerabilities when Gaussian noise was applied to images (\tabref{tab:clip}). Similar observations regarding the degraded performance of CLIP ViTs due to distribution shifts have been reported in certain studies \citep{DBLP:conf/icml/ShuG0W0L23,DBLP:conf/cvpr/WortsmanIKLKRLH22}; they focused on the characteristics of CLIP pretrained weights due to different datasets or training schemes, but we present a different perspective on this issue.

We performed ablation studies to identify what determined the difference in robustness (\appref{app:ablation}). We discovered that the core factor in different robustness arose from the preprocessing pipeline. Specifically, CLIP ViTs apply mean-std normalization to input images using certain per-channel mean and standard deviation (std) constants, which we refer to as the \texttt{OPENAI} constants (\appref{app:extsetup}), whereas supervised ViTs apply different per-channel mean-std constants, which are often called \texttt{INCEPTION} constants \citep{DBLP:conf/cvpr/SzegedyVISW16}. In other words, the \texttt{OPENAI} mean-std constants led to vulnerability to Gaussian noise, whereas the \texttt{INCEPTION} mean-std constants did not show this vulnerability.

Indeed, when we replaced the \texttt{OPENAI} mean-std constants with the \texttt{INCEPTION} constants, the CLIP ViTs achieved improved robustness (\tabref{tab:clippet}). The reverse also holds, and similar vulnerability was observed when adopting \texttt{IMAGENET} mean-std constants for ViTs. Full results on other datasets are available in \appref{app:additional}, where we observed these improvements across various pretrained weights with different training recipes.

\begin{table}[t!]
	\caption{ImageNet-1K results for ViT-B/16 $224^2$ with eight different pretrained weights. CLIP ViTs tended to yield worse ranks under noise.}
	\label{tab:clip}
	\begin{center}
		\resizebox{1.0\textwidth}{!}{
			\begin{tabular}{ll|ccc}
				\toprule
				\textbf{Pretrained Model                                                } & \textbf{Mean-Std           } & \textbf{Top-1 $\rightarrow$ w/ Noise                  } & \textbf{Rank $\rightarrow$ w/ Noise                } & \textbf{RankDiff} \\
				\midrule
				\texttt{vit\_base\_patch16\_224.augreg\_in1k}                             & \texttt{INCEPTION}           & 79.15                     $\rightarrow$ 62.21           & 862                      $\rightarrow$ 487           & -375              \\
				\texttt{vit\_base\_patch16\_224.augreg2\_in21k\_ft\_in1k}                 & \texttt{INCEPTION}           & 85.10                     $\rightarrow$ 74.50           & 203                      $\rightarrow$ 96            & -107              \\
				\texttt{vit\_base\_patch16\_224.orig\_in21k\_ft\_in1k}                    & \texttt{INCEPTION}           & 81.79                     $\rightarrow$ 60.91           & 603                      $\rightarrow$ 513           & -90               \\
				\texttt{vit\_base\_patch16\_224.augreg\_in21k\_ft\_in1k}                  & \texttt{INCEPTION}           & 84.53                     $\rightarrow$ 71.19           & 257                      $\rightarrow$ 192           & -65               \\
				\texttt{vit\_base\_patch16\_clip\_224.openai\_ft\_in12k\_in1k}            & \texttt{OPENAI}              & 85.94                     $\rightarrow$ 70.81           & 135                      $\rightarrow$ 209           & +74               \\
				\texttt{vit\_base\_patch16\_clip\_224.laion2b\_ft\_in12k\_in1k}           & \texttt{OPENAI}              & 86.17                     $\rightarrow$ 71.24           & 114                      $\rightarrow$ 189           & +75               \\
				\texttt{vit\_base\_patch16\_clip\_224.laion2b\_ft\_in1k}                  & \texttt{OPENAI}              & 85.47                     $\rightarrow$ 67.88           & 168                      $\rightarrow$ 311           & +143              \\
				\texttt{vit\_base\_patch16\_clip\_224.openai\_ft\_in1k}                   & \texttt{OPENAI}              & 85.29                     $\rightarrow$ 67.06           & 182                      $\rightarrow$ 340           & +158              \\
				\bottomrule
			\end{tabular}
		}
	\end{center}
\end{table}

\begin{table}[t!]
	\caption{Classification accuracy (\%) for fine-tuning ViTs on the Oxford-IIIT Pet.}
	\label{tab:clippet}
	\begin{center}
		\resizebox{1.0\textwidth}{!}{
			\begin{tabular}{ll|cc}
				\toprule
				\textbf{Pretrained Model                                               } & \textbf{Mean-Std           } & \textbf{Val. Acc. w/ Noise                          } & \textbf{Test Acc. w/ Noise                                  } \\
				\midrule
				\texttt{vit\_base\_patch16\_clip\_224.openai\_ft\_in12k\_in1k}           & \texttt{OPENAI}              & 94.5 (1.0)       $\rightarrow$   77.7 (3.4)           & 93.8 (1.0)                 $\rightarrow$ 76.3 (4.1)           \\
				\texttt{vit\_base\_patch16\_clip\_224.openai\_ft\_in12k\_in1k}           & \texttt{INCEPTION}           & 95.5 (0.5)       $\rightarrow$   87.3 (2.1)           & 95.2 (0.6)                 $\rightarrow$ 87.2 (2.2)           \\
				\texttt{vit\_base\_patch16\_clip\_224.openai\_ft\_in12k\_in1k}           & \texttt{IMAGENET}            & 94.2 (0.4)       $\rightarrow$   73.9 (2.3)           & 93.4 (0.5)                 $\rightarrow$ 72.7 (2.5)           \\
				\texttt{vit\_base\_patch16\_clip\_224.datacompxl}                        & \texttt{OPENAI}              & 93.6 (0.9)       $\rightarrow$   67.4 (6.0)           & 93.2 (0.9)                 $\rightarrow$ 67.3 (5.9)           \\
				\texttt{vit\_base\_patch16\_clip\_224.datacompxl}                        & \texttt{INCEPTION}           & 94.7 (0.5)       $\rightarrow$   78.5 (4.0)           & 93.6 (0.6)                 $\rightarrow$ 78.4 (3.8)           \\
				\texttt{vit\_base\_patch16\_clip\_224.datacompxl}                        & \texttt{IMAGENET}            & 92.8 (0.9)       $\rightarrow$   57.6 (7.4)           & 92.6 (0.5)                 $\rightarrow$ 58.1 (7.4)           \\
				\texttt{vit\_base\_patch16\_clip\_224.dfn2b}                             & \texttt{OPENAI}              & 95.0 (0.3)       $\rightarrow$   73.1 (1.5)           & 94.1 (0.5)                 $\rightarrow$ 73.3 (1.9)           \\
				\texttt{vit\_base\_patch16\_clip\_224.dfn2b}                             & \texttt{INCEPTION}           & 94.8 (0.8)       $\rightarrow$   78.6 (4.9)           & 93.6 (0.4)                 $\rightarrow$ 79.8 (5.0)           \\
				\texttt{vit\_base\_patch16\_clip\_224.dfn2b}                             & \texttt{IMAGENET}            & 95.1 (0.3)       $\rightarrow$   69.8 (2.7)           & 94.0 (0.4)                 $\rightarrow$ 68.8 (3.1)           \\
				\texttt{vit\_base\_patch16\_clip\_224.metaclip\_2pt5b}                   & \texttt{OPENAI}              & 92.8 (0.7)       $\rightarrow$   64.8 (4.4)           & 92.0 (0.7)                 $\rightarrow$ 62.3 (3.9)           \\
				\texttt{vit\_base\_patch16\_clip\_224.metaclip\_2pt5b}                   & \texttt{INCEPTION}           & 94.7 (0.4)       $\rightarrow$   78.5 (2.0)           & 93.9 (0.3)                 $\rightarrow$ 78.5 (1.8)           \\
				\texttt{vit\_base\_patch16\_clip\_224.metaclip\_2pt5b}                   & \texttt{IMAGENET}            & 91.6 (0.3)       $\rightarrow$   54.5 (2.5)           & 90.8 (0.3)                 $\rightarrow$ 52.8 (1.6)           \\
				\texttt{vit\_base\_patch16\_clip\_224.openai}                            & \texttt{OPENAI}              & 92.5 (0.3)       $\rightarrow$   71.7 (1.0)           & 91.9 (0.6)                 $\rightarrow$ 70.2 (1.2)           \\
				\texttt{vit\_base\_patch16\_clip\_224.openai}                            & \texttt{INCEPTION}           & 94.0 (0.7)       $\rightarrow$   78.6 (4.6)           & 93.2 (0.9)                 $\rightarrow$ 77.3 (5.1)           \\
				\texttt{vit\_base\_patch16\_clip\_224.openai}                            & \texttt{IMAGENET}            & 91.2 (0.5)       $\rightarrow$   58.5 (4.0)           & 90.7 (0.8)                 $\rightarrow$ 58.4 (4.3)           \\
				\texttt{vit\_base\_patch16\_clip\_224.laion2b}                           & \texttt{OPENAI}              & 91.8 (1.2)       $\rightarrow$   56.1 (7.7)           & 90.5 (1.1)                 $\rightarrow$ 54.0 (6.6)           \\
				\texttt{vit\_base\_patch16\_clip\_224.laion2b}                           & \texttt{INCEPTION}           & 93.8 (0.6)       $\rightarrow$   76.4 (1.9)           & 92.8 (0.5)                 $\rightarrow$ 75.6 (1.8)           \\
				\texttt{vit\_base\_patch16\_clip\_224.laion2b}                           & \texttt{IMAGENET}            & 90.2 (0.8)       $\rightarrow$   52.3 (4.4)           & 89.5 (0.8)                 $\rightarrow$ 51.4 (4.1)           \\
				\texttt{vit\_base\_patch16\_224.augreg\_in1k}                            & \texttt{OPENAI}              & 95.5 (0.2)       $\rightarrow$   88.7 (0.3)           & 94.9 (0.2)                 $\rightarrow$ 88.2 (0.7)           \\
				\texttt{vit\_base\_patch16\_224.augreg\_in1k}                            & \texttt{INCEPTION}           & 95.5 (0.1)       $\rightarrow$   89.7 (0.5)           & 94.4 (0.3)                 $\rightarrow$ 89.2 (0.8)           \\
				\texttt{vit\_base\_patch16\_224.augreg\_in1k}                            & \texttt{IMAGENET}            & 95.5 (0.2)       $\rightarrow$   87.7 (0.5)           & 94.9 (0.2)                 $\rightarrow$ 87.9 (0.7)           \\
				\texttt{vit\_base\_patch16\_224.augreg\_in21k}                           & \texttt{OPENAI}              & 95.6 (0.3)       $\rightarrow$   91.4 (0.3)           & 95.2 (0.5)                 $\rightarrow$ 91.9 (0.6)           \\
				\texttt{vit\_base\_patch16\_224.augreg\_in21k}                           & \texttt{INCEPTION}           & 95.9 (0.2)       $\rightarrow$   92.3 (0.3)           & 95.6 (0.4)                 $\rightarrow$ 92.6 (0.4)           \\
				\texttt{vit\_base\_patch16\_224.augreg\_in21k}                           & \texttt{IMAGENET}            & 95.7 (0.5)       $\rightarrow$   91.6 (0.5)           & 95.6 (0.3)                 $\rightarrow$ 92.0 (0.5)           \\
				\texttt{vit\_base\_patch16\_224.mae}                                     & \texttt{OPENAI}              & 93.5 (0.3)       $\rightarrow$   70.8 (2.8)           & 93.4 (0.2)                 $\rightarrow$ 72.7 (2.3)           \\
				\texttt{vit\_base\_patch16\_224.mae}                                     & \texttt{INCEPTION}           & 93.7 (0.3)       $\rightarrow$   75.0 (2.1)           & 93.3 (0.2)                 $\rightarrow$ 75.2 (2.5)           \\
				\texttt{vit\_base\_patch16\_224.mae}                                     & \texttt{IMAGENET}            & 93.5 (0.3)       $\rightarrow$   72.0 (2.0)           & 92.7 (0.5)                 $\rightarrow$ 71.9 (2.2)           \\
				\bottomrule
			\end{tabular}
		}
	\end{center}
\end{table}

\subsection{Theoretical Analysis}
We give an explanation for the empirical vulnerability of CLIP ViTs to additive Gaussian noise. The key point is that channel-wise normalization sets the pixel-space sensitivity scale: Smaller per-channel stds in the input normalization enlarge the worst-case response to perturbations even before the backbone acts.

\paragraph{Setup.} Let $x\in[0,1]^{C\times H\times W}$ be an image and $\eta$ an additive perturbation. Let $\mu\in\mathbb R^C$ and $\boldsymbol\sigma\in\mathbb R_{>0}^C$ be the per-channel means and stds, and define the normalization $N_{\mu,\boldsymbol\sigma}(x) \coloneqq (x-\mu)/\boldsymbol\sigma$. Let $f:\mathbb R^{C\times H\times W} \to \mathbb R^K$ denote the vision backbone operating on normalized inputs, which is globally $\ell_2$-Lipschitz with constant $L_z$ on its domain.\footnote{This assumption holds when linear layers have bounded spectral norms and other modules are Lipschitz. ReLU: $1$-Lipschitz~\citep{DBLP:journals/ml/GoukFPC21}; GELU: $\approx 1.13$~\citep{DBLP:journals/corr/HendrycksG16}; LayerNorm: Lipschitz with a constant set by $\gamma$ and $\varepsilon$~\citep{DBLP:journals/corr/BaKH16}.}  We study the end-to-end pipeline $F_{\mu,\boldsymbol\sigma} \coloneqq f\circ N_{\mu,\boldsymbol\sigma}$ and its $\ell_2$-Lipschitz constant $\|F_{\mu,\boldsymbol\sigma}\|_{\mathrm{Lip}}$.

\begin{theorem}[Pixel-space Lipschitz bound]
	\label{thm:clip}
	For any image $x$ and perturbation $\eta$, we obtain
	\[
		\big\|F_{\mu,\boldsymbol\sigma}(x+\eta)-F_{\mu,\boldsymbol\sigma}(x)\big\|_2
		\le L_z\Big\|\frac{\eta}{\boldsymbol\sigma}\Big\|_2
		\le \frac{L_z}{\sigma_{\min}}\|\eta\|_2,
	\]
	where $\sigma_{\min} \coloneqq \min_c \boldsymbol\sigma_c$. In particular, the pixel-space Lipschitz constant satisfies $\|F_{\mu,\boldsymbol\sigma}\|_{\mathrm{Lip}}\le L_z/\sigma_{\min}$.
\end{theorem}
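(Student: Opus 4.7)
My plan is to exploit the compositional structure $F_{\mu,\boldsymbol\sigma}=f\circ N_{\mu,\boldsymbol\sigma}$ together with the assumed global Lipschitzness of $f$. First I would apply the $L_z$-Lipschitz property of $f$ to the two images $N_{\mu,\boldsymbol\sigma}(x+\eta)$ and $N_{\mu,\boldsymbol\sigma}(x)$, which immediately reduces the left-hand side of the bound to $L_z$ times the $\ell_2$ distance between these two normalized inputs. The question then collapses to measuring how much the normalization layer amplifies the perturbation in its own coordinate system.

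Second, because $N_{\mu,\boldsymbol\sigma}$ is an affine channel-wise rescaling, the mean $\mu$ cancels under subtraction and the normalized difference becomes simply $\eta/\boldsymbol\sigma$, with the division broadcast across spatial axes within each channel. Squaring and summing over every channel $c$ and pixel $(i,j)$ yields $\|\eta/\boldsymbol\sigma\|_2^2=\sum_{c,i,j}\eta_{c,i,j}^2/\boldsymbol\sigma_c^2$, which combined with the Lipschitz step gives the first (tight, per-channel) inequality of the theorem.

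Third, to obtain the coarser $\sigma_{\min}$ form I would bound each denominator $\boldsymbol\sigma_c$ by the smallest channel std $\sigma_{\min}\coloneqq\min_c\boldsymbol\sigma_c$, factor this scalar outside the double sum, and recognise the remaining sum as $\|\eta\|_2^2$. Taking square roots and chaining the two inequalities delivers the stated pixel-space bound; since the argument is uniform in $x$ and $\eta$, the same constant $L_z/\sigma_{\min}$ controls the Lipschitz norm of $F_{\mu,\boldsymbol\sigma}$.

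There is essentially no obstacle here: the result is a one-line composition bound followed by a worst-case scalarisation of the per-channel weights. The only thing I would be careful about is making the channel broadcasting explicit so that $\boldsymbol\sigma\in\mathbb R_{>0}^C$ is not conflated with the scalar noise standard deviation used in Sections~\ref{sec:ksrs-theory} and~\ref{sec:pool-theory}, and flagging that the first inequality is already tight while the second is the clean worst-case summary that makes the empirical \texttt{OPENAI}-versus-\texttt{INCEPTION} gap (smaller $\sigma_{\min}$ for \texttt{OPENAI} $\Rightarrow$ larger $L_z/\sigma_{\min}$) fall out quantitatively.
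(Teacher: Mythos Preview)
Your proposal is correct and matches the paper's proof exactly: the paper writes $z=N_{\mu,\boldsymbol\sigma}(x)$, $\tilde z=z+\eta/\boldsymbol\sigma$, applies the $L_z$-Lipschitzness of $f$, and then bounds $\|\eta/\boldsymbol\sigma\|_2$ by $\|\eta\|_2/\sigma_{\min}$. Your version is, if anything, more explicit about the channel broadcasting and the squared-sum manipulation than the paper's one-line argument.
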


\begin{proof}
	Write $z=N_{\mu,\boldsymbol\sigma}(x)$ and $\tilde z=N_{\mu,\boldsymbol\sigma}(x+\eta)=z+\eta/\boldsymbol\sigma$. By Lipschitzness of $f$, we have $\|f(\tilde z)-f(z)\|_2\le L_z\|\eta/\boldsymbol\sigma\|_2\le (L_z/\sigma_{\min})\|\eta\|_2$.
\end{proof}

\begin{remark}[Practical reading of Theorem~\ref{thm:clip}]
	For the standard choices
	\[
		\boldsymbol\sigma_{\text{INCEPTION}}=(0.5, 0.5, 0.5),\qquad
		\boldsymbol\sigma_{\text{CLIP}} = (0.26862954, 0.26130258, 0.27577711),
	\]
	the worst-case pixel-space sensitivity bound for CLIP is greater by a factor
	\[
		\frac{L_z/\min(\boldsymbol\sigma_{\text{CLIP}})}{L_z/\min(\boldsymbol\sigma_{\text{INCEPTION}})}
		=\frac{0.5}{0.26130258} \approx 1.91,
	\]
	relative to a supervised ViT using \texttt{INCEPTION} statistics. This $\sim 1.91\times$ looser bound amplifies the effect of input perturbations before the feature extractor.
\end{remark}

\section{Conclusion}
Across \texttt{timm} models and controlled experiments, four design patterns consistently improved robustness: (1) larger stem kernel sizes, (2) smaller resolutions, (3) average pooling instead of max pooling, and (4) supervised ViTs rather than CLIP ViTs. Practically, we recommend models with these design patterns such as \texttt{vit\_base\_patch32\_224.augreg\_in21k\_ft\_in1k} for ViT-B as an example. Our analysis integrates these findings: Theorem~\ref{thm:lowpassstems} proves that noise attenuation is quadratic with stem kernel size; Theorem~\ref{thm:resolution} yields an analogous gain under anti-aliased downsampling; Theorem~\ref{thm:pooling} shows that average pooling is unbiased with error that falls as the window grows, whereas max pooling is positively biased and, for a uniform signal, its error grows logarithmically; and Theorem~\ref{thm:clip} explains CLIP sensitivity using pixel-space Lipschitz bounds scaling as $1/\sigma_{\min}$, which leads to a $\sim1.91\times$ difference when comparing the \texttt{OPENAI} and \texttt{INCEPTION} constants. These insights provide actionable guidelines for practitioners to enhance the robustness of vision models against Gaussian noise in diverse applications.

\bibliography{iclr2026_conference}
\bibliographystyle{iclr2026_conference}

\newpage
\appendix
\startcontents[apx]
\section*{Appendix Table of Contents}
\printcontents[apx]{l}{1}{\setcounter{tocdepth}{2}}

\section{Proofs for Theorems~\ref{thm:lowpassstems} and~\ref{thm:resolution}}
\label{app:proofs}
Here, we provide proofs of the quadratic noise-decay results in \secref{sec:ksrs-theory}.

\subsection{Conventions and assumptions}
\label{app:standing}

\paragraph{DFT convention and Parseval.} For $u\in\mathbb{R}^{H\times W}$ with discrete Fourier transform (DFT) $\widehat u$ on the frequency grid $\Omega$, we use the Parseval identity
\begin{equation}\label{eq:parseval-app}
	\frac{1}{HW}\sum_{\boldsymbol\omega\in\Omega}\bigl|\widehat u(\boldsymbol\omega)\bigr|^2
	= \sum_{p\in\{1,\dots,H\}\times\{1,\dots,W\}} |u(p)|^2.
\end{equation}
We write $\varepsilon \coloneqq 2\pi/\max\{H,W\}$ for the infrared cutoff.

\paragraph{Filter family.} For $k\ge3$, let $K_k\in\mathbb{R}^{k\times k}$ denote the linear, shift-invariant stem kernel with DFT $\widehat K_k$. We assume only the following low-pass envelope; the same assumption applies to $K_{g(s)}$ when used as the anti-aliasing filter at scale $g(s)$:
\begin{itemize}[leftmargin=1.25em]
	\item \textbf{(A\textsubscript{roll})} (Radial low-pass envelope at scale $1/k$) There exist $\beta,\delta>0$ such that, for all frequencies $\boldsymbol\omega$,
	      \[
		      |\widehat K_k(\boldsymbol\omega)| \le \phi_k(\|\boldsymbol\omega\|),
		      \qquad \phi_k(r) \coloneqq (1+\beta k r)^{-1-\delta},
	      \]
	      where $\phi_k$ is nonincreasing in $r$.
\end{itemize}
This assumption provides a monotone radial upper envelope sufficient for establishing our upper bounds: When estimating $\frac{1}{HW}\sum_{\boldsymbol\omega}|\widehat K_k(\boldsymbol\omega)|^2$, we first dominate $|\widehat K_k|^2$ by $\phi_k^2$ and then apply the sum-integral comparison in \eqref{eq:sum-to-int}.

\paragraph{Noise model and gains.} Let $\eta\sim\mathcal N(0,\sigma^2 I)$ be spatially white Gaussian noise. The per-pixel noise gain of the stem kernel is
\begin{equation}\label{eq:gamma-def-app}
	\gamma(k) \coloneqq \frac{\mathbb{E} \bigl[\|K_k*\eta\|_2^2\bigr]}{\sigma^2 HW}
	\stackrel{\eqref{eq:parseval-app}}=\frac{1}{HW}\sum_{\boldsymbol\omega} \bigl|\widehat K_k(\boldsymbol\omega)\bigr|^2
	= \|K_k\|_{F}^{2}.
\end{equation}
For anti-aliased downsampling with a factor $s\ge1$, we define
\begin{equation}
	D_s \coloneqq (\Downarrow_s)\circ K_{g(s)},\qquad c_1 s \le g(s) \le c_2 s,
\end{equation}
and its per-output-pixel noise gain
\begin{equation}\label{eq:gamma-down-def-app}
	\gamma_{\downarrow}(s) \coloneqq \frac{\mathbb{E} \bigl[\|D_s\eta\|_2^2\bigr]}{\sigma^2 HW/s^{2}}.
\end{equation}

\paragraph{Radial sum-integral comparison.} Let $\Omega$ be the $H\times W$ DFT grid with spacing $\varepsilon$, and let $g:[\varepsilon,\pi]\to\mathbb{R}_{\ge0}$ be radially nonincreasing. We partition $\Omega$ into annuli $\mathcal A_j\coloneqq\{\boldsymbol\omega: j\varepsilon\le\|\boldsymbol\omega\|<(j+1)\varepsilon\}$. Because each grid point occupies an area $\asymp \varepsilon^2$ and the annulus area is $2\pi r \varepsilon$ up to boundary effects, there exist absolute lattice constants $c_1,c_2>0$---independent of $H,W,k,s$---with
\[
	c_1 HW (2\pi j\varepsilon)\varepsilon \le |\mathcal A_j|\le c_2 HW (2\pi (j+1)\varepsilon)\varepsilon.
\]
By monotonicity, $g((j+1)\varepsilon) |\mathcal A_j|\le\sum_{\boldsymbol\omega\in\mathcal A_j} g(\|\boldsymbol\omega\|)\le g(j\varepsilon) |\mathcal A_j|$. Summing over $j$ and dividing by $HW$ turns the lattice sum into upper and lower Riemann sums for $r\mapsto 2\pi r g(r)$ with mesh $\varepsilon$, yielding absolute constants $A_1,A_2>0$ such that
\begin{equation}\label{eq:sum-to-int}
	A_1 \int_{\varepsilon}^{\pi} r g(r) dr \le \frac{1}{HW}\sum_{\boldsymbol\omega\in\Omega} g(\|\boldsymbol\omega\|) \le A_2 \int_{\varepsilon}^{\pi} r g(r) dr.
\end{equation}
As $\varepsilon\to 0$, both bounds converge to the same limit; for finite grids, $A_1,A_2$ absorb edge discrepancies and remain independent of the kernel scale $k$ or downsampling factor $s$.

\subsection{Proof of Theorem~\ref{thm:lowpassstems} (Quadratic decay in stem kernel size)}
\label{app:proofquadkernel}

\begin{proof}
	By \eqref{eq:gamma-def-app}, \eqref{eq:sum-to-int}, and \textup{(A\textsubscript{roll})}, we have
	\[
		\gamma(k) \lesssim \int_{\varepsilon}^{\pi} r \bigl|\widehat K_k(r)\bigr|^2 dr
		\le \int_{\varepsilon}^{\pi} r \bigl(1+\beta k r\bigr)^{-2-2\delta} dr.
	\]
	Let $u=1+\beta k r$. Then $r=(u-1)/(\beta k)$ and $dr=du/(\beta k)$, so
	\[
		\int_{\varepsilon}^{\pi} r (1+\beta k r)^{-2-2\delta} dr
		=\frac{1}{\beta^2 k^2}\int_{1+\beta k\varepsilon}^{1+\beta k\pi} \frac{u-1}{u^{2+2\delta}} du
		\le \frac{1}{\beta^2 k^2}\int_{1}^{\infty} \frac{u-1}{u^{2+2\delta}} du
		=\frac{C}{k^2},
	\]
	for a finite constant $C=C(\beta,\delta)$. Hence $\gamma(k)\le C'/k^2$ for some $C'$ independent of $k$.
\end{proof}

\subsection{Proof of Theorem~\ref{thm:resolution} (Quadratic decay under anti-aliased downsampling)}
\label{app:proofquaddown}
We first state the following identity for white noise.

\begin{lemma}[Per-output-pixel gain identity]\label{lem:down-id}
	For $D_s$ defined in \eqref{eq:Ds-def} and white noise $\eta\sim\mathcal N(0,\sigma^2 I)$,
	\[
		\gamma_{\downarrow}(s) = \|K_{g(s)}\|_{F}^{2}.
	\]
\end{lemma}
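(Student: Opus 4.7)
The plan is to reduce the identity to a single elementary fact: convolving spatially white Gaussian noise with a fixed kernel produces a stationary Gaussian field whose per-pixel variance is the Frobenius energy of the kernel, and downsampling merely selects a subset of such pixels without altering their (identical) marginal variance.

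First, I would introduce $y \coloneqq K_{g(s)}*\eta$ as an intermediate field on the full $H\times W$ grid. Since $\eta$ is zero-mean Gaussian and $y$ is a linear function of $\eta$, $y$ is zero-mean Gaussian, and by the shift-invariance of convolution with a fixed kernel together with the i.i.d.\ structure of $\eta$, the marginals $y(p)$ are identically distributed across pixels $p$. Next, I would compute that common marginal variance. The direct route is
\[
\mathbb{E}\bigl[y(p)^2\bigr] = \sum_{q} K_{g(s)}(q)^2 \mathbb{E}\bigl[\eta(p-q)^2\bigr] = \sigma^2 \|K_{g(s)}\|_F^2,
\]
which uses only the whiteness of $\eta$. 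Equivalently, one can derive the same fact from the DFT convention in \eqref{eq:parseval-app} by computing $\mathbb{E}[\|y\|_2^2] = \sigma^2 HW \|K_{g(s)}\|_F^2$ and dividing by the number of pixels $HW$ using stationarity.

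Then I would handle the downsampling step: $D_s\eta$ consists of $HW/s^2$ output pixels, each of which equals $y(p)$ for a particular sampled location. By linearity of expectation,
\[
\mathbb{E}\bigl[\|D_s\eta\|_2^2\bigr] = \frac{HW}{s^2} \cdot \sigma^2 \|K_{g(s)}\|_F^2.
\]
Dividing by the normalization $\sigma^2 HW/s^2$ in \eqref{eq:gamma-down-def-app} immediately gives $\gamma_\downarrow(s) = \|K_{g(s)}\|_F^2$.

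The only mildly delicate point, and the one I would be most careful about, is the boundary convention underlying the stationarity step: strict spatial constancy of $\mathbb{E}[y(p)^2]$ holds exactly under the circular convolution implicit in the DFT setup of Appendix~\ref{app:standing}; under zero-padded convolution, edge pixels of $y$ have reduced variance. I would therefore either (i) invoke the circular convention used throughout the appendix so that the per-pixel variance is exactly $\sigma^2\|K_{g(s)}\|_F^2$ at every site, or (ii) note that the downsampling selects $HW/s^2$ interior pixels up to $O(k/\min\{H,W\})$ boundary corrections, which vanish in the regime $k = g(s) \ll \min\{H,W\}$ assumed in the noise-gain analysis. With the circular convention matching the DFT framework already in use, the identity is exact and no approximation is needed.
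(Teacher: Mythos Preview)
Your proposal is correct and follows essentially the same approach as the paper's proof: both compute the per-pixel variance of the filtered field via stationarity and Parseval to get $\sigma^2\|K_{g(s)}\|_F^2$, then observe that downsampling merely selects $HW/s^2$ samples of equal marginal variance before dividing by the normalization. Your discussion of the circular-versus-zero-padded boundary convention is a welcome addition that the paper's proof leaves implicit.
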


\begin{proof}
	Stationarity of white noise and \eqref{eq:parseval-app} give
	\[
		\mathbb{E} \bigl[\|K_{g(s)}*\eta\|_2^2\bigr] = (HW) \sigma^2 \|K_{g(s)}\|_F^2.
	\]
	Downsampling by $s$ keeps every $s$-th sample along each axis: The retained samples all have equal variance as the original, pre-downsampled field. Therefore,
	\[
		\mathbb{E} \bigl[\|D_s\eta\|_2^2\bigr] = \frac{HW}{s^2} \sigma^2 \|K_{g(s)}\|_F^2,
	\]
	and the normalization in \eqref{eq:gamma-down-def-app} yields $\gamma_{\downarrow}(s)=\|K_{g(s)}\|_F^2$.
\end{proof}

\begin{proof}[Proof of Theorem~\ref{thm:resolution}]
	By Lemma~\ref{lem:down-id}, $\gamma_{\downarrow}(s)=\|K_{g(s)}\|_F^2$. Applying Theorem~\ref{thm:lowpassstems} with kernel size $k=g(s)$ gives
	\[
		\gamma_{\downarrow}(s) \le \frac{C}{g(s)^2} \le \frac{C}{(c_1 s)^2} = \frac{C'}{s^2},
	\]
	with $C'=C/c_1^2$ independent of $s$.
\end{proof}

\section{Proof of Theorem~\ref{thm:pooling} (Average and max poolings under Gaussian Noise)}
\label{app:proof-avg-vs-max}
Consider a pooling window of size \(k\ge2\) in a single channel. Let the clean activations be \(S=(S_1,\dots,S_k)\in\mathbb{R}^k\) and let the observation be \(S+\eta\), where \(\eta=(\eta_1,\dots,\eta_k)\stackrel{\text{i.i.d.}}{\sim}\mathcal N(0,\sigma^2)\). We define
\[
	X_{\mathrm{avg}} \coloneqq \frac{1}{k}\sum_{i=1}^k (S_i+\eta_i),
	\qquad
	X_{\mathrm{max}} \coloneqq \max_{1\le i\le k}(S_i+\eta_i),
\]
and their clean counterparts \(S_{\mathrm{avg}}=\frac{1}{k}\sum_i S_i\), \(S_{\mathrm{max}}=\max_i S_i\). Let the errors be \(\delta_{\mathrm{avg}} \coloneqq X_{\mathrm{avg}}-S_{\mathrm{avg}}\), \(\delta_{\mathrm{max}} \coloneqq X_{\mathrm{max}}-S_{\mathrm{max}}\). Write the order statistics \(S_{(1)}\ge\cdots\ge S_{(k)}\), define the gap \(\Delta\coloneqq S_{(1)}-S_{(2)}\ge0\), and the standardized gap \(z\coloneqq \Delta/\sigma\). We use \(Z_i\stackrel{\text{i.i.d.}}{\sim}\mathcal N(0,1)\), \(M_k\coloneqq \max_{1\le i\le k} Z_i\), and \(A_k\coloneqq \max_{1\le i\le k}|Z_i|\).

\begin{proof}[Proof of \textup{(i)}]
	By definition, \(\delta_{\mathrm{avg}}=\frac{1}{k}\sum_{i=1}^k \eta_i\). Hence
	\[
		\mathbb{E} [\delta_{\mathrm{avg}}]=\frac1k\sum_i \mathbb{E} [\eta_i]=0,
		\qquad
		\Var[\delta_{\mathrm{avg}}]=\frac{1}{k^2}\sum_i \Var [\eta_i]=\frac{\sigma^2}{k}.
	\]
	This part requires only i.i.d. zero-mean noise with variance \(\sigma^2\).
\end{proof}

\begin{proof}[Proof of \textup{(ii)}]
	(Positive bias) Let \(i^\star\in\arg\max_i S_i\). Then \(X_{\max}\ge S_{i^\star}+\eta_{i^\star}\). Taking expectations and using \(\mathbb{E} [\eta_{i^\star}]=0\) yields
	\[
		\mathbb{E} [\delta_{\max}] =\mathbb{E} [X_{\max}-S_{\max}]\ge 0.
	\]

	\begin{figure}[h!]
		\centering
		\includegraphics[width=.99\linewidth]{./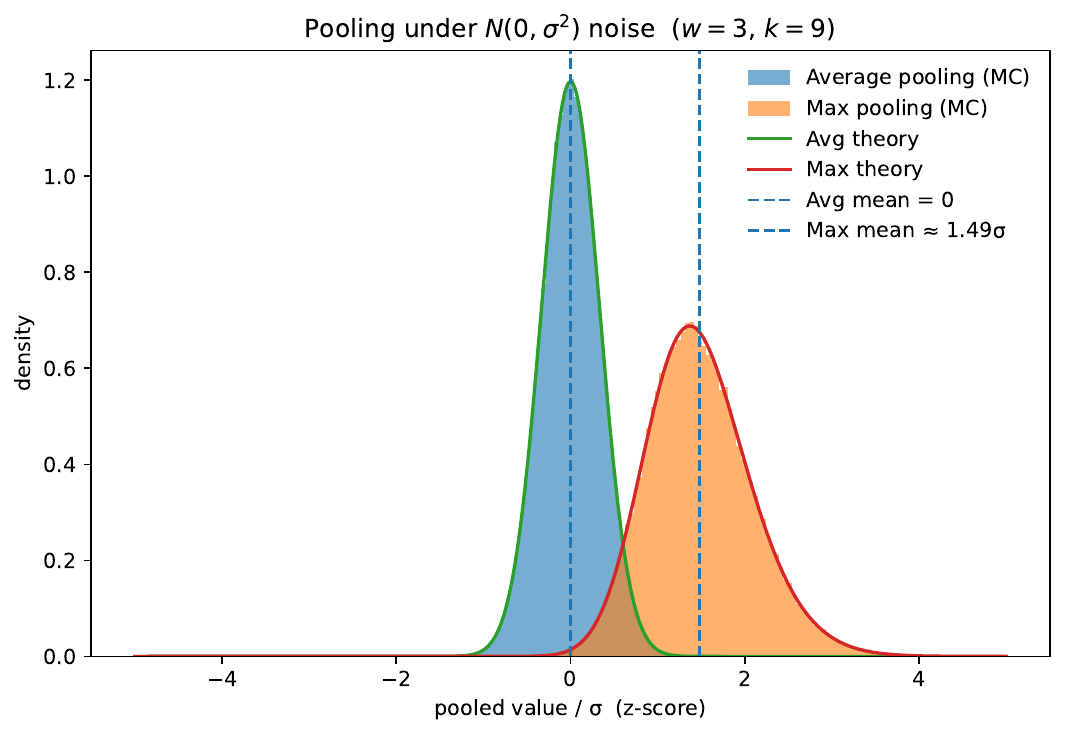}
		\caption{Illustration of positive bias introduced by max pooling}
		\label{fig:poolbias}
	\end{figure}

	(Uniform-signal case) If \(S_1=\cdots=S_k\), translate so \(S_i\equiv0\). Then \(\delta_{\max}=\max_i\eta_i=\sigma M_k\) and
	\[
		\mathbb{E} [\delta_{\max}^2]=\sigma^2 \mathbb{E} [M_k^2].
	\]
	Classical Gaussian extreme-value asymptotics \citep{hall1979rate} give
	\[
		\mathbb{E} [M_k]=\sqrt{2\log k}-\frac{\log \log k+\log(4\pi)}{2\sqrt{2\log k}}+o \left((\log k)^{-1/2}\right),
		\quad
		\Var[M_k]=\frac{\pi^2}{12\log k}+o \left((\log k)^{-1}\right),
	\]
	hence
	\[
		\mathbb{E} [M_k^2]=\Var[M_k]+\big(\mathbb{E} [M_k]\big)^2
		=2\log k-\log \log k-\log(4\pi)+o(1),
	\]
	Because $\delta_{\max}=\sigma M_k$, we have
	\[
		\mathbb{E}[\delta_{\max}^2] = \sigma^2 [\mathbb{E} M_k^2] = \sigma^2 (2\log k-\log\log k-\log(4\pi)+o(1)) = \Theta(\sigma^2\log k),
	\]
	so the MSE scales as \(\Theta(\sigma^2\log k)\).

	(General case) For any realization,
	\[
		\big|\delta_{\max}\big|=\big|\max_i(S_i+\eta_i)-\max_i S_i\big|\le \max_i|\eta_i|=\sigma A_k.
	\]
	Hence
	\[
		\mathbb{E} [\delta_{\max}^2]\le \sigma^2 \mathbb{E}[A_k^2].
	\]
	We now bound \(\mathbb{E}[A_k^2]\) explicitly.

	\begin{lemma}\label{lem:Ak2}
		For \(A_k=\max_{1\le i\le k}|Z_i|\) with \(Z_i\stackrel{\text{i.i.d.}}{\sim}\mathcal N(0,1)\), we have \( \mathbb{E} [A_k^2]\le 2\log(2k)+2\).
	\end{lemma}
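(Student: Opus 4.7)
} The plan is to combine a Gaussian union-tail bound with the layer-cake representation of $\mathbb{E}[A_k^2]$, splitting the integral at the natural threshold where the union bound starts to beat the trivial bound of $1$. First I would write
\[
\mathbb{E}[A_k^2] = \int_0^\infty \mathbb{P}(A_k^2 > u) du = \int_0^\infty \mathbb{P}(A_k > \sqrt{u}) du,
\]
and observe that by the union bound together with the standard Gaussian tail bound $\mathbb{P}(Z_1 > t) \le \tfrac12 e^{-t^2/2}$ (or the cruder $e^{-t^2/2}$), one has
\[
\mathbb{P}(A_k > t) \le 2k \mathbb{P}(Z_1 > t) \le 2k e^{-t^2/2} \qquad \text{for all } t\ge 0.
\]

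Next I would pick the threshold $u_0 \coloneqq 2\log(2k)$, chosen so that $2k e^{-u_0/2} = 1$, and split the tail integral as
\[
\mathbb{E}[A_k^2] \le \int_0^{u_0} 1 du + \int_{u_0}^\infty \min\bigl(1, 2k e^{-u/2}\bigr) du \le u_0 + \int_{u_0}^\infty 2k e^{-u/2} du.
\]
Evaluating the remaining exponential integral gives $\int_{u_0}^\infty 2k e^{-u/2} du = 4k e^{-u_0/2} = 2$, so
\[
\mathbb{E}[A_k^2] \le 2\log(2k) + 2,
\]
which is exactly the stated bound.

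There is no serious obstacle here; the only choice to justify is the splitting point $u_0$, which is forced by balancing the two estimates, and the only ingredients needed are the union bound, the standard Gaussian tail bound, and the layer-cake identity for $\mathbb{E}[A_k^2]$. The moment-generating-function (Chernoff/Jensen) route through $\exp(\lambda A_k^2) \le \sum_i \exp(\lambda Z_i^2)$ with $\lambda < 1/2$ also works but yields worse constants (of the form $c\log k + c'$ with $c>2$), so the direct tail-integration approach above is preferable for matching the $2\log(2k)+2$ target.
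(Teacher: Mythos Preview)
Your proof is correct and essentially identical to the paper's: both combine the union bound $\Pr(A_k\ge t)\le 2k e^{-t^2/2}$ with the layer-cake formula and split at the same threshold. The only cosmetic difference is that the paper integrates in the variable $t$ via $\mathbb{E}[A_k^2]=\int_0^\infty 2t\,\Pr(A_k\ge t)\,dt$ and splits at $t_0=\sqrt{2\log(2k)}$, which is just the change of variables $u=t^2$ from your version.
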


	\begin{proof}[Proof of Lemma~\ref{lem:Ak2}]
		For \(t\ge0\), \(\Pr(A_k\ge t)\le \sum_{i=1}^k \Pr(|Z_i|\ge t)\le 2k e^{-t^2/2}\), where the last step uses the union bound and the standard Gaussian tail estimate \(\Pr(|Z|\ge t)\le 2e^{-t^2/2}\) for \(Z\sim\mathcal N(0,1)\); see, \eg, \citet{vershynin2018introduction}. Using \(\mathbb{E} [X^2]=\int_0^\infty 2t \Pr(X\ge t) dt\) for a nonnegative \(X\) and splitting at \(t_0\coloneqq \sqrt{2\log(2k)}\),
		\begin{align*}
			\mathbb{E} [A_k^2] & = \int_0^{t_0} 2t \Pr(A_k\ge t) dt + \int_{t_0}^{\infty} 2t \Pr(A_k\ge t) dt     \\
			                   & \le t_0^2 + \int_{t_0}^{\infty} 4k t e^{-t^2/2} dt = 2\log(2k)+ 4k e^{-t_0^2/2}.
		\end{align*}
		Because \(e^{-t_0^2/2}=e^{-\log(2k)}=1/(2k)\), the last term equals \(2\), proving the claim.
	\end{proof}

	By Lemma~\ref{lem:Ak2},
	\[
		\mathbb{E} [\delta_{\max}^2]\le \sigma^2\big(2\log(2k)+2\big).
	\]
\end{proof}

\begin{proof}[Proof of \textup{(iii)}]
	Let \(T_{\mathrm{avg}}(n)=\tfrac1k\sum_i n_i\). By Cauchy--Schwarz, \(|T_{\mathrm{avg}}(n)|\le \|n\|_2 \|k^{-1}(1,\dots,1)\|_2=\|n\|_2/\sqrt{k}\), so \(\|T_{\mathrm{avg}}\|_{\ell_2\to\ell_2}=k^{-1/2}\), tight for constant \(n\). For max, for any \(a,b\), \(|\max_i a_i-\max_i b_i|\le \|a-b\|_\infty\le \|a-b\|_2\), hence \(\|T_{\mathrm{max}}\|_{\ell_2\to\ell_2}\le 1\), tight for a one-hot \(n\).
\end{proof}

\begin{proof}[Proof of \textup{(iv)}]
	Translate so \(S_{(1)}=0\) and \(S_i\le -\Delta\) for \(i\ge2\). Let \(\mathcal S\) be the switch event that some \(j\ge2\) overtakes the top index after noise:
	\[
		\mathcal S \coloneqq \{\exists j\ge2: \eta_j-\Delta \ge \eta_1\}
		= \{\exists j\ge2: Z_j-Z_1 \ge z\}.
	\]
	Because $Z_j$ and $Z_1$ are independent standard normals, we have $Z_j-Z_1\sim\mathcal N(0,2)$; hence, by a union bound \(\Pr(\mathcal S)\le (k-1)\Pr(\mathcal N(0,2)\ge z)\le (k-1)e^{-z^2/4}\to0\) as \(z\to\infty\). On \(\mathcal S^c\), \(X_{\max}=S_{(1)}+\eta_1=\eta_1\), so \(\delta_{\max}^2=\eta_1^2\). Dominated convergence then gives \(\mathbb{E} [\delta_{\max}^2]\to \mathbb{E} [\eta_1^2]=\sigma^2\).
\end{proof}

\begin{figure}[h!]
	\centering
	\includegraphics[width=.99\linewidth]{./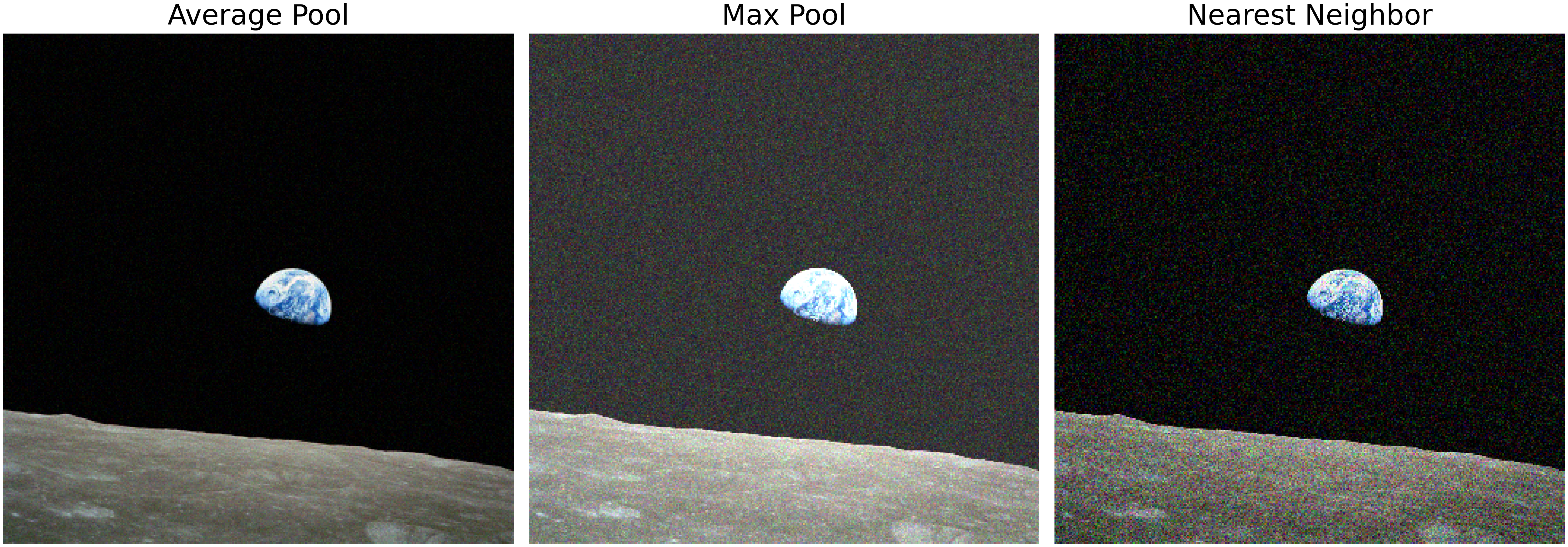}
	\caption{Examples of pooling outputs from a noisy image using average, max, and nearest neighbor}
	\label{fig:poolex}
\end{figure}

\section{Additional Experimental Results}
\label{app:additional}

\figref{fig:ksrsval} shows the accuracy on the validation set for the controlled experiments on kernel size and resolution.

\begin{figure}[t!]
	\centering
	\begin{subfigure}{.5\textwidth}
		\centering
		\includegraphics[width=.99\linewidth]{./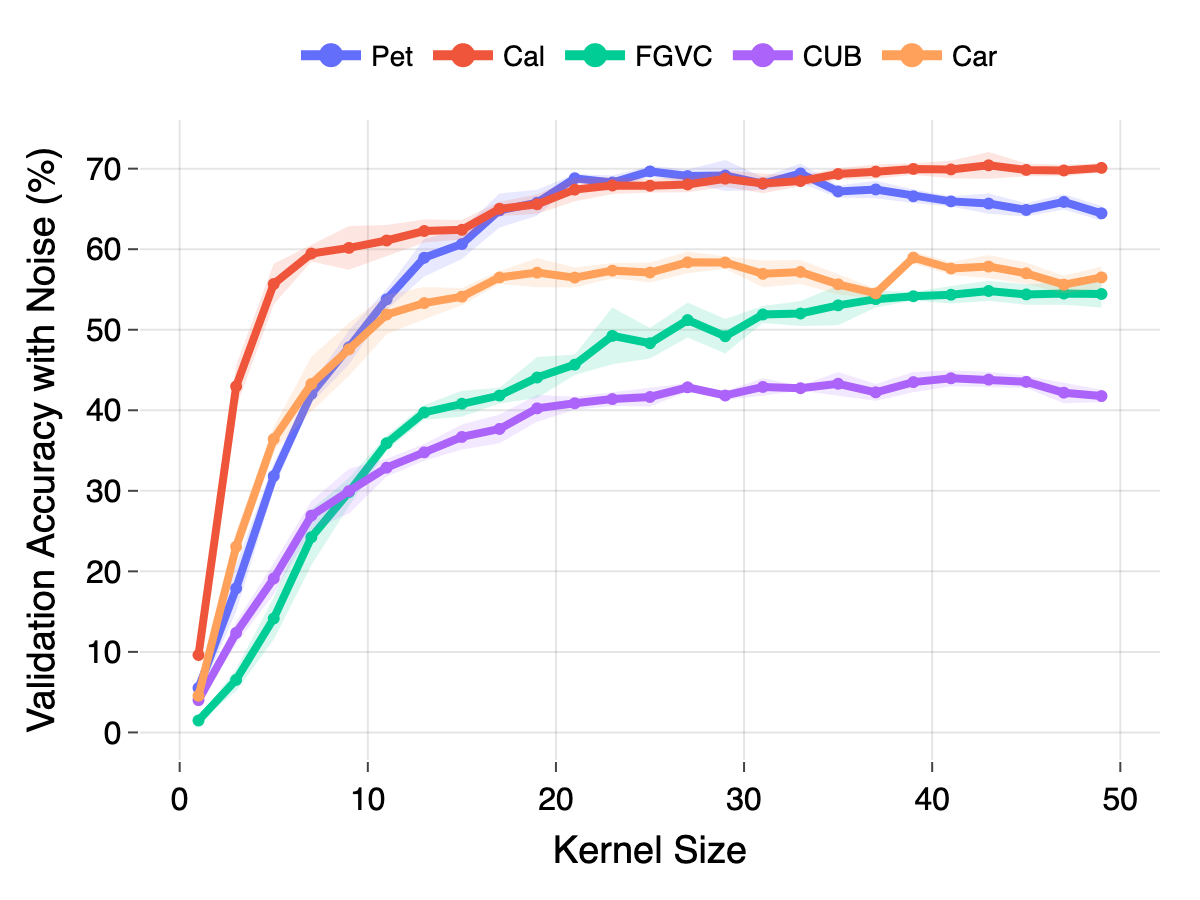}
	\end{subfigure}%
	\begin{subfigure}{.5\textwidth}
		\centering
		\includegraphics[width=.99\linewidth]{./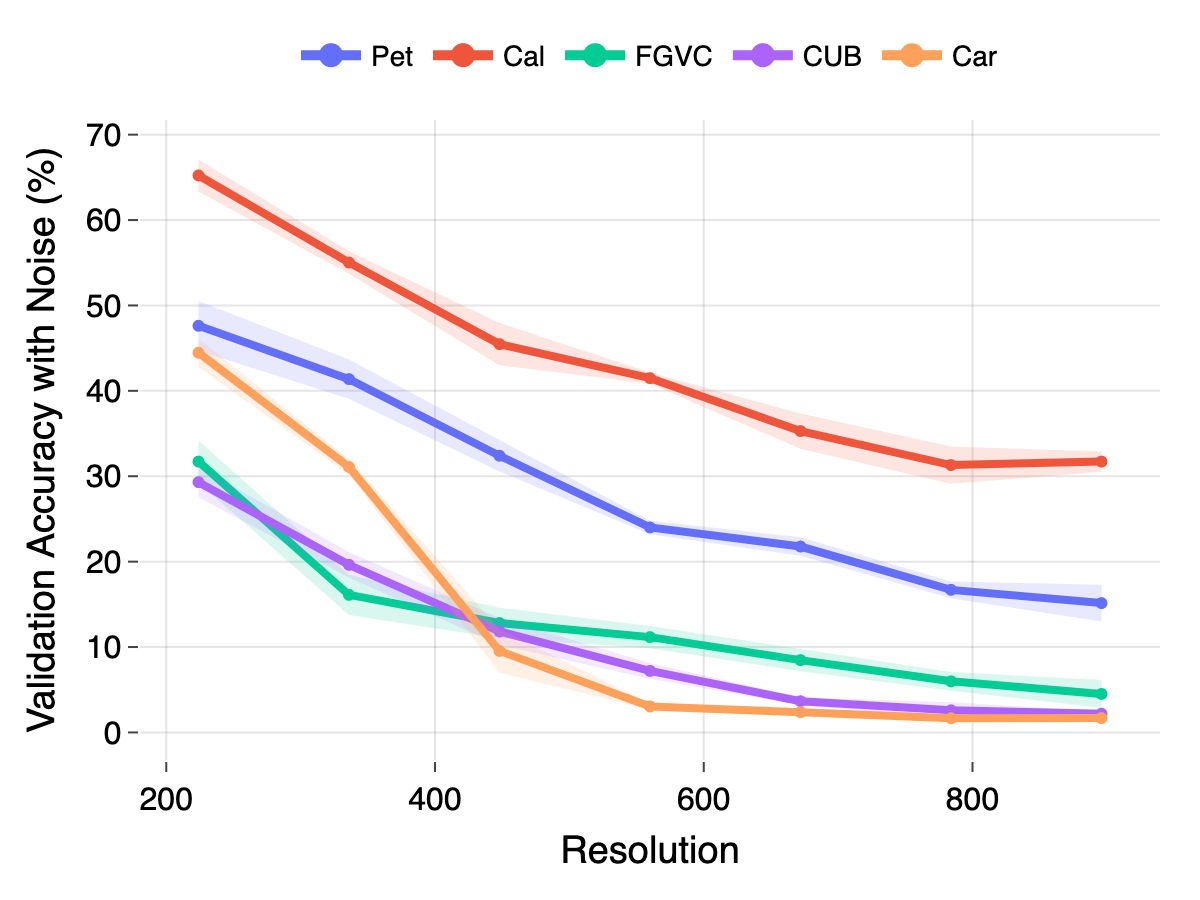}
	\end{subfigure}
	\caption{The results on the validation set}
	\label{fig:ksrsval}
\end{figure}

We also report additional results for ResNet-50-D (\tabref{tab:r50d}) and ResNet-101-D (\tabref{tab:r101d}) for different choices of pooling.

\begin{table}[t!]
	\caption{Classification accuracy comparing different poolings, using ResNet-50-D}
	\label{tab:r50d}
	\begin{center}
		\begin{tabular}{ll|ccc}
			\toprule
			\textbf{Dataset                          } & \textbf{Model              } & \textbf{MaxPool                               } & \textbf{NNPool     } & \textbf{AvgPool    } \\
			\midrule
			\multirow{4}{*}{Oxford-IIIT Pet}           & Val. Acc.                    & 87.7                            (0.6)           & 87.6 (0.4)           & 86.7 (0.5)           \\
			                                           & Test Acc.                    & 85.3                            (0.8)           & 84.7 (0.6)           & 84.8 (0.9)           \\
			                                           & Val. Acc. w/ Noise           & 48.3                            (2.2)           & 46.3 (1.9)           & 54.0 (3.7)           \\
			                                           & Test Acc. w/ Noise           & 47.8                            (1.3)           & 45.2 (2.3)           & 53.6 (2.7)           \\
			\midrule
			\multirow{4}{*}{Caltech-101}               & Val. Acc.                    & 81.3                            (0.7)           & 82.4 (1.1)           & 81.7 (0.5)           \\
			                                           & Test Acc.                    & 80.5                            (0.3)           & 80.7 (0.4)           & 81.6 (0.7)           \\
			                                           & Val. Acc. w/ Noise           & 61.1                            (1.5)           & 60.3 (1.4)           & 62.7 (1.4)           \\
			                                           & Test Acc. w/ Noise           & 59.8                            (1.7)           & 58.3 (1.3)           & 61.6 (1.3)           \\
			\midrule
			\multirow{4}{*}{FGVC-Aircraft}             & Val. Acc.                    & 68.1                            (0.2)           & 67.7 (0.8)           & 69.0 (0.7)           \\
			                                           & Test Acc.                    & 68.8                            (1.1)           & 68.3 (1.5)           & 69.6 (0.3)           \\
			                                           & Val. Acc. w/ Noise           & 27.7                            (1.6)           & 24.8 (1.8)           & 42.9 (1.7)           \\
			                                           & Test Acc. w/ Noise           & 31.5                            (2.1)           & 26.9 (0.8)           & 44.8 (1.1)           \\
			\midrule
			\multirow{4}{*}{\shortstack[l]{Caltech-UCSD                                                                                                                               \\ Birds-200-2011}} & Val. Acc.             & 69.8                            (0.7)                 & 69.8 (0.4)                        & 69.3 (1.1)                         \\
			                                           & Test Acc.                    & 67.3                            (0.4)           & 66.4 (0.6)           & 65.9 (0.4)           \\
			                                           & Val. Acc. w/ Noise           & 26.8                            (0.6)           & 28.7 (1.7)           & 31.8 (1.6)           \\
			                                           & Test Acc. w/ Noise           & 26.0                            (0.7)           & 27.4 (1.2)           & 31.1 (2.1)           \\
			\midrule
			\multirow{4}{*}{Stanford Cars}             & Val. Acc.                    & 86.5                            (0.5)           & 85.7 (0.5)           & 84.9 (0.2)           \\
			                                           & Test Acc.                    & 84.8                            (0.2)           & 83.6 (0.3)           & 83.2 (0.3)           \\
			                                           & Val. Acc. w/ Noise           & 56.0                            (0.5)           & 53.6 (1.6)           & 56.8 (2.2)           \\
			                                           & Test Acc. w/ Noise           & 54.8                            (1.5)           & 51.6 (1.5)           & 55.3 (2.0)           \\
			\bottomrule
		\end{tabular}
	\end{center}
\end{table}

\begin{table}[t!]
	\caption{Classification accuracy comparing different poolings, using ResNet-101-D}
	\label{tab:r101d}
	\begin{center}
		\begin{tabular}{ll|ccc}
			\toprule
			\textbf{Dataset                          } & \textbf{Model              } & \textbf{MaxPool    } & \textbf{NNPool     } & \textbf{AvgPool    } \\
			\midrule
			\multirow{4}{*}{Oxford-IIIT Pet}           & Val. Acc.                    & 87.0 (0.5)           & 86.5 (0.8)           & 86.2 (0.3)           \\
			                                           & Test Acc.                    & 84.8 (0.6)           & 84.4 (0.7)           & 84.3 (0.6)           \\
			                                           & Val. Acc. w/ Noise           & 52.3 (1.9)           & 51.0 (1.5)           & 56.4 (2.1)           \\
			                                           & Test Acc. w/ Noise           & 51.0 (1.3)           & 49.2 (1.4)           & 56.3 (2.4)           \\
			\midrule
			\multirow{4}{*}{Caltech-101}               & Val. Acc.                    & 82.0 (0.9)           & 82.9 (0.6)           & 82.9 (0.5)           \\
			                                           & Test Acc.                    & 80.6 (0.4)           & 80.7 (0.9)           & 81.2 (0.4)           \\
			                                           & Val. Acc. w/ Noise           & 63.4 (1.9)           & 63.7 (1.3)           & 64.8 (1.1)           \\
			                                           & Test Acc. w/ Noise           & 62.1 (1.5)           & 61.6 (1.7)           & 63.7 (1.4)           \\
			\midrule
			\multirow{4}{*}{FGVC-Aircraft}             & Val. Acc.                    & 69.5 (0.3)           & 67.7 (0.6)           & 69.4 (0.8)           \\
			                                           & Test Acc.                    & 71.0 (1.0)           & 67.1 (0.4)           & 69.6 (0.7)           \\
			                                           & Val. Acc. w/ Noise           & 36.9 (4.0)           & 28.5 (2.7)           & 48.4 (1.5)           \\
			                                           & Test Acc. w/ Noise           & 39.1 (3.5)           & 30.5 (2.6)           & 49.5 (1.8)           \\
			\midrule
			\multirow{4}{*}{\shortstack[l]{Caltech-UCSD                                                                                                    \\ Birds-200-2011}} & Val. Acc.             & 70.5 (0.5)                      & 70.0 (0.7)                       & 68.9 (0.6)                          \\
			                                           & Test Acc.                    & 67.4 (0.7)           & 66.8 (0.4)           & 66.0 (0.7)           \\
			                                           & Val. Acc. w/ Noise           & 29.7 (1.7)           & 29.3 (2.0)           & 33.4 (1.8)           \\
			                                           & Test Acc. w/ Noise           & 29.0 (1.7)           & 29.2 (2.6)           & 32.2 (1.5)           \\
			\midrule
			\multirow{4}{*}{Stanford Cars}             & Val. Acc.                    & 84.5 (0.4)           & 83.9 (0.4)           & 83.7 (0.5)           \\
			                                           & Test Acc.                    & 83.3 (0.2)           & 81.9 (0.8)           & 82.1 (0.6)           \\
			                                           & Val. Acc. w/ Noise           & 57.5 (1.2)           & 55.2 (0.9)           & 58.2 (1.2)           \\
			                                           & Test Acc. w/ Noise           & 56.0 (0.9)           & 54.5 (0.7)           & 56.4 (1.2)           \\
			\bottomrule
		\end{tabular}
	\end{center}
\end{table}

\tabref{tab:aa} summarizes the results for ResNet-AA, which adopts anti-aliasing average pooling architecture \citep{DBLP:conf/icml/Zhang19}. Specifically, ResNet-AA adopts average pooling in all downsampling layers as well as replacing the max pooling in the stem with average pooling. ResNet-AA was marginally more robust than the ResNet with average pooling only in the stem, but not as significant as the difference with the original ResNet. The result indicates that the core difference in robustness was caused by the use of average pooling in the stem.

\begin{table}[t!]
	\caption{Results on ResNet-AA}
	\label{tab:aa}
	\begin{center}
		\begin{tabular}{ll|ccc}
			\toprule
			\textbf{Dataset                          } & \textbf{Model              } & \textbf{ResNet-AA-50 } & \textbf{ResNet-AA-50-D } & \textbf{ResNet-AA-101-D } \\
			\midrule
			\multirow{4}{*}{Oxford-IIIT Pet}           & Val. Acc.                    & 84.8 (0.8)             & 86.9 (0.4)               & 86.2 (0.3)                \\
			                                           & Test Acc.                    & 83.1 (0.8)             & 84.7 (0.8)               & 84.3 (0.2)                \\
			                                           & Val. Acc. w/ Noise           & 50.1 (2.7)             & 55.6 (2.0)               & 58.1 (2.7)                \\
			                                           & Test Acc. w/ Noise           & 49.6 (3.2)             & 53.9 (1.4)               & 58.9 (2.9)                \\
			\midrule
			\multirow{4}{*}{Caltech-101}               & Val. Acc.                    & 80.2 (0.4)             & 81.7 (0.7)               & 83.0 (0.3)                \\
			                                           & Test Acc.                    & 79.5 (0.6)             & 80.6 (0.5)               & 80.9 (0.5)                \\
			                                           & Val. Acc. w/ Noise           & 61.2 (1.6)             & 61.7 (2.2)               & 65.0 (1.5)                \\
			                                           & Test Acc. w/ Noise           & 60.1 (1.5)             & 60.8 (2.8)               & 63.3 (1.3)                \\
			\midrule
			\multirow{4}{*}{FGVC-Aircraft}             & Val. Acc.                    & 67.3 (0.5)             & 69.8 (1.0)               & 69.1 (0.6)                \\
			                                           & Test Acc.                    & 67.1 (0.9)             & 70.7 (1.2)               & 70.0 (0.9)                \\
			                                           & Val. Acc. w/ Noise           & 40.4 (3.6)             & 45.5 (2.5)               & 49.0 (2.9)                \\
			                                           & Test Acc. w/ Noise           & 42.3 (3.9)             & 48.3 (2.2)               & 49.5 (2.5)                \\
			\midrule
			\multirow{4}{*}{\shortstack[l]{Caltech-UCSD                                                                                                               \\ Birds-200-2011}} & Val. Acc.             & 65.3 (0.6)                      & 68.9 (0.8)                        & 69.4 (0.6)                          \\
			                                           & Test Acc.                    & 62.3 (1.1)             & 66.1 (0.6)               & 66.1 (0.4)                \\
			                                           & Val. Acc. w/ Noise           & 28.6 (0.8)             & 32.5 (1.0)               & 31.7 (2.8)                \\
			                                           & Test Acc. w/ Noise           & 27.5 (1.3)             & 31.4 (1.9)               & 31.0 (2.4)                \\
			\midrule
			\multirow{4}{*}{Stanford Cars}             & Val. Acc.                    & 79.9 (0.6)             & 85.9 (0.3)               & 83.5 (0.6)                \\
			                                           & Test Acc.                    & 78.9 (0.6)             & 83.9 (0.4)               & 81.6 (0.8)                \\
			                                           & Val. Acc. w/ Noise           & 51.8 (1.6)             & 60.3 (2.8)               & 57.2 (3.2)                \\
			                                           & Test Acc. w/ Noise           & 50.3 (1.0)             & 58.9 (2.0)               & 56.0 (3.2)                \\
			\bottomrule
		\end{tabular}
	\end{center}
\end{table}

\tabref{tab:clipothers} summarizes ImageNet-1K results for other ViT configurations, including different patch sizes, resolutions, and training recipes.

\begin{table}[t!]
	\caption{ImageNet-1K results for other ViT configurations}
	\label{tab:clipothers}
	\begin{center}
		\resizebox{1.0\textwidth}{!}{
			\begin{tabular}{ll|ccc}
				\toprule
				\textbf{Pretrained Model                                                } & \textbf{Mean-Std           } & \textbf{Top-1 $\rightarrow$ w/ Noise                  } & \textbf{Rank $\rightarrow$ w/ Noise                } & \textbf{RankDiff} \\
				\midrule
				\texttt{vit\_base\_patch16\_384.augreg\_in1k}                             & \texttt{INCEPTION}           & 81.10                     $\rightarrow$ 60.23           & 676                      $\rightarrow$ 524           & -152              \\
				\texttt{vit\_base\_patch16\_384.augreg\_in21k\_ft\_in1k}                  & \texttt{INCEPTION}           & 85.99                     $\rightarrow$ 70.89           & 129                      $\rightarrow$ 208           & +79               \\
				\texttt{vit\_base\_patch16\_clip\_384.laion2b\_ft\_in12k\_in1k}           & \texttt{OPENAI}              & 87.21                     $\rightarrow$ 70.38           & 55                       $\rightarrow$ 227           & +172              \\
				\texttt{vit\_base\_patch16\_clip\_384.openai\_ft\_in1k}                   & \texttt{OPENAI}              & 86.20                     $\rightarrow$ 68.55           & 110                      $\rightarrow$ 285           & +175              \\
				\texttt{vit\_base\_patch16\_clip\_384.openai\_ft\_in12k\_in1k}            & \texttt{OPENAI}              & 87.03                     $\rightarrow$ 69.11           & 61                       $\rightarrow$ 269           & +208              \\
				\texttt{vit\_base\_patch16\_clip\_384.laion2b\_ft\_in1k}                  & \texttt{OPENAI}              & 86.62                     $\rightarrow$ 66.63           & 83                       $\rightarrow$ 348           & +265              \\
				\midrule
				\texttt{vit\_base\_patch32\_224.augreg\_in1k}                             & \texttt{INCEPTION}           & 74.90                     $\rightarrow$ 58.44           & 1075                     $\rightarrow$ 569           & -506              \\
				\texttt{vit\_base\_patch32\_224.sam\_in1k}                                & \texttt{INCEPTION}           & 73.69                     $\rightarrow$ 51.33           & 1101                     $\rightarrow$ 748           & -353              \\
				\texttt{vit\_base\_patch32\_224.augreg\_in21k\_ft\_in1k}                  & \texttt{INCEPTION}           & 80.71                     $\rightarrow$ 65.31           & 719                      $\rightarrow$ 392           & -327              \\
				\texttt{vit\_base\_patch32\_clip\_224.openai\_ft\_in1k}                   & \texttt{OPENAI}              & 81.93                     $\rightarrow$ 63.94           & 591                      $\rightarrow$ 428           & -163              \\
				\texttt{vit\_base\_patch32\_clip\_224.laion2b\_ft\_in1k}                  & \texttt{OPENAI}              & 82.58                     $\rightarrow$ 63.09           & 504                      $\rightarrow$ 450           & -54               \\
				\texttt{vit\_base\_patch32\_clip\_224.laion2b\_ft\_in12k\_in1k}           & \texttt{OPENAI}              & 83.30                     $\rightarrow$ 65.57           & 419                      $\rightarrow$ 386           & -33               \\
				\midrule
				\texttt{vit\_base\_patch32\_384.augreg\_in1k}                             & \texttt{INCEPTION}           & 78.75                     $\rightarrow$ 59.65           & 893                      $\rightarrow$ 539           & -354              \\
				\texttt{vit\_base\_patch32\_384.augreg\_in21k\_ft\_in1k}                  & \texttt{INCEPTION}           & 83.35                     $\rightarrow$ 63.72           & 412                      $\rightarrow$ 437           & +25               \\
				\texttt{vit\_base\_patch32\_clip\_384.openai\_ft\_in12k\_in1k}            & \texttt{OPENAI}              & 85.21                     $\rightarrow$ 68.40           & 191                      $\rightarrow$ 293           & +102              \\
				\texttt{vit\_base\_patch32\_clip\_384.laion2b\_ft\_in12k\_in1k}           & \texttt{OPENAI}              & 85.37                     $\rightarrow$ 65.58           & 180                      $\rightarrow$ 383           & +203              \\
				\bottomrule
			\end{tabular}
		}
	\end{center}
\end{table}

\tabref{tab:clipcal}, \tabref{tab:clipfgvc}, \tabref{tab:clipcub}, and \tabref{tab:clipcar} summarize full results for fine-tuning ViTs on other datasets. When we replaced the \texttt{OPENAI} mean-std constants with the \texttt{INCEPTION} constants, the CLIP ViTs achieved improved robustness.

\begin{table}[t!]
	\caption{Classification accuracy (\%) for fine-tuning ViTs on the Caltech-101.}
	\label{tab:clipcal}
	\begin{center}
		\resizebox{1.0\textwidth}{!}{
			\begin{tabular}{ll|cc}
				\toprule
				\textbf{Pretrained Model                                               } & \textbf{Mean-Std           } & \textbf{Val. Acc. w/ Noise                          }                                                & \textbf{Test Acc. w/ Noise                                  }             \\
				\midrule
				\texttt{vit\_base\_patch16\_clip\_224.openai\_ft\_in12k\_in1k}           & \texttt{OPENAI}              & 93.1                    (0.6)                  $\rightarrow$ 84.1                             (1.1)  & 92.0                     (0.8)                  $\rightarrow$ 81.8 (1.4)  \\
				\texttt{vit\_base\_patch16\_clip\_224.openai\_ft\_in12k\_in1k}           & \texttt{INCEPTION}           & 95.7                    (0.6)                  $\rightarrow$ 90.4                             (0.8)  & 94.5                     (0.7)                  $\rightarrow$ 89.5 (1.2)  \\
				\texttt{vit\_base\_patch16\_clip\_224.openai\_ft\_in12k\_in1k}           & \texttt{IMAGENET}            & 91.6                    (1.2)                  $\rightarrow$ 80.5                             (2.4)  & 90.5                     (0.8)                  $\rightarrow$ 78.5 (2.4)  \\
				\texttt{vit\_base\_patch16\_clip\_224.datacompxl}                        & \texttt{OPENAI}              & 95.3                    (0.8)                  $\rightarrow$ 86.4                             (2.3)  & 94.6                     (0.6)                  $\rightarrow$ 84.8 (2.1)  \\
				\texttt{vit\_base\_patch16\_clip\_224.datacompxl}                        & \texttt{INCEPTION}           & 96.2                    (0.6)                  $\rightarrow$ 91.0                             (1.3)  & 95.7                     (0.9)                  $\rightarrow$ 89.7 (1.4)  \\
				\texttt{vit\_base\_patch16\_clip\_224.datacompxl}                        & \texttt{IMAGENET}            & 94.7                    (0.7)                  $\rightarrow$ 82.5                             (1.9)  & 93.8                     (1.0)                  $\rightarrow$ 80.8 (2.6)  \\
				\texttt{vit\_base\_patch16\_clip\_224.dfn2b}                             & \texttt{OPENAI}              & 90.2                    (11.2)                 $\rightarrow$ 80.1                             (15.0) & 88.9                     (12.3)                 $\rightarrow$ 78.8 (14.8) \\
				\texttt{vit\_base\_patch16\_clip\_224.dfn2b}                             & \texttt{INCEPTION}           & 96.5                    (0.6)                  $\rightarrow$ 91.7                             (1.2)  & 95.9                     (0.5)                  $\rightarrow$ 91.0 (1.8)  \\
				\texttt{vit\_base\_patch16\_clip\_224.dfn2b}                             & \texttt{IMAGENET}            & 93.7                    (3.9)                  $\rightarrow$ 79.9                             (10.5) & 92.4                     (4.6)                  $\rightarrow$ 78.2 (10.7) \\
				\texttt{vit\_base\_patch16\_clip\_224.metaclip\_2pt5b}                   & \texttt{OPENAI}              & 94.9                    (0.7)                  $\rightarrow$ 81.5                             (2.0)  & 94.2                     (0.7)                  $\rightarrow$ 79.5 (2.0)  \\
				\texttt{vit\_base\_patch16\_clip\_224.metaclip\_2pt5b}                   & \texttt{INCEPTION}           & 96.0                    (0.5)                  $\rightarrow$ 89.5                             (2.1)  & 95.0                     (0.8)                  $\rightarrow$ 87.8 (2.8)  \\
				\texttt{vit\_base\_patch16\_clip\_224.metaclip\_2pt5b}                   & \texttt{IMAGENET}            & 93.6                    (1.0)                  $\rightarrow$ 76.3                             (3.2)  & 92.3                     (1.2)                  $\rightarrow$ 74.6 (2.9)  \\
				\texttt{vit\_base\_patch16\_clip\_224.openai}                            & \texttt{OPENAI}              & 92.8                    (0.2)                  $\rightarrow$ 78.9                             (3.1)  & 91.7                     (1.1)                  $\rightarrow$ 76.9 (3.6)  \\
				\texttt{vit\_base\_patch16\_clip\_224.openai}                            & \texttt{INCEPTION}           & 95.4                    (0.3)                  $\rightarrow$ 87.8                             (0.9)  & 95.4                     (0.6)                  $\rightarrow$ 86.9 (0.9)  \\
				\texttt{vit\_base\_patch16\_clip\_224.openai}                            & \texttt{IMAGENET}            & 92.3                    (0.4)                  $\rightarrow$ 80.3                             (1.8)  & 91.8                     (0.7)                  $\rightarrow$ 77.7 (1.9)  \\
				\texttt{vit\_base\_patch16\_clip\_224.laion2b}                           & \texttt{OPENAI}              & 92.3                    (0.9)                  $\rightarrow$ 77.7                             (2.4)  & 91.2                     (0.6)                  $\rightarrow$ 75.6 (1.6)  \\
				\texttt{vit\_base\_patch16\_clip\_224.laion2b}                           & \texttt{INCEPTION}           & 95.3                    (0.6)                  $\rightarrow$ 87.3                             (0.3)  & 94.3                     (0.6)                  $\rightarrow$ 85.8 (0.5)  \\
				\texttt{vit\_base\_patch16\_clip\_224.laion2b}                           & \texttt{IMAGENET}            & 90.1                    (0.8)                  $\rightarrow$ 71.5                             (2.4)  & 89.2                     (0.5)                  $\rightarrow$ 67.6 (2.4)  \\
				\texttt{vit\_base\_patch16\_224.augreg\_in1k}                            & \texttt{OPENAI}              & 94.4                    (0.3)                  $\rightarrow$ 84.8                             (0.9)  & 94.1                     (0.3)                  $\rightarrow$ 85.7 (0.4)  \\
				\texttt{vit\_base\_patch16\_224.augreg\_in1k}                            & \texttt{INCEPTION}           & 94.1                    (0.3)                  $\rightarrow$ 86.0                             (0.5)  & 93.8                     (0.2)                  $\rightarrow$ 86.7 (0.8)  \\
				\texttt{vit\_base\_patch16\_224.augreg\_in1k}                            & \texttt{IMAGENET}            & 94.3                    (0.6)                  $\rightarrow$ 84.7                             (0.6)  & 94.0                     (0.3)                  $\rightarrow$ 85.7 (0.7)  \\
				\texttt{vit\_base\_patch16\_224.augreg\_in21k}                           & \texttt{OPENAI}              & 97.0                    (0.4)                  $\rightarrow$ 95.1                             (0.5)  & 96.3                     (0.4)                  $\rightarrow$ 94.5 (0.7)  \\
				\texttt{vit\_base\_patch16\_224.augreg\_in21k}                           & \texttt{INCEPTION}           & 97.1                    (0.3)                  $\rightarrow$ 95.8                             (0.5)  & 96.6                     (0.2)                  $\rightarrow$ 95.4 (0.3)  \\
				\texttt{vit\_base\_patch16\_224.augreg\_in21k}                           & \texttt{IMAGENET}            & 97.2                    (0.2)                  $\rightarrow$ 95.1                             (0.2)  & 96.6                     (0.5)                  $\rightarrow$ 94.6 (0.5)  \\
				\texttt{vit\_base\_patch16\_224.mae}                                     & \texttt{OPENAI}              & 92.0                    (0.5)                  $\rightarrow$ 76.3                             (0.7)  & 91.6                     (0.8)                  $\rightarrow$ 75.7 (1.2)  \\
				\texttt{vit\_base\_patch16\_224.mae}                                     & \texttt{INCEPTION}           & 91.6                    (0.6)                  $\rightarrow$ 80.8                             (1.2)  & 91.7                     (0.4)                  $\rightarrow$ 79.4 (0.7)  \\
				\texttt{vit\_base\_patch16\_224.mae}                                     & \texttt{IMAGENET}            & 91.7                    (0.5)                  $\rightarrow$ 75.4                             (0.6)  & 91.6                     (0.4)                  $\rightarrow$ 74.5 (1.2)  \\
				\bottomrule
			\end{tabular}
		}
	\end{center}
\end{table}

\begin{table}[t!]
	\caption{Classification accuracy (\%) for fine-tuning ViTs on the FGVC-Aircraft.}
	\label{tab:clipfgvc}
	\begin{center}
		\resizebox{1.0\textwidth}{!}{
			\begin{tabular}{ll|cc}
				\toprule
				\textbf{Pretrained Model                                               } & \textbf{Mean-Std           } & \textbf{Val. Acc. w/ Noise                          }                                                & \textbf{Test Acc. w/ Noise                                  }             \\
				\midrule
				\texttt{vit\_base\_patch16\_clip\_224.openai\_ft\_in12k\_in1k}           & \texttt{OPENAI}              & 62.6                    (1.7)                  $\rightarrow$ 46.6                             (1.9)  & 61.7                     (1.2)                  $\rightarrow$ 47.4 (1.7)  \\
				\texttt{vit\_base\_patch16\_clip\_224.openai\_ft\_in12k\_in1k}           & \texttt{INCEPTION}           & 60.4                    (23.6)                 $\rightarrow$ 50.8                             (20.4) & 59.5                     (23.7)                 $\rightarrow$ 50.9 (21.1) \\
				\texttt{vit\_base\_patch16\_clip\_224.openai\_ft\_in12k\_in1k}           & \texttt{IMAGENET}            & 59.5                    (1.4)                  $\rightarrow$ 44.0                             (1.6)  & 58.2                     (1.2)                  $\rightarrow$ 45.2 (1.2)  \\
				\texttt{vit\_base\_patch16\_clip\_224.datacompxl}                        & \texttt{OPENAI}              & 73.7                    (4.9)                  $\rightarrow$ 50.7                             (7.7)  & 72.2                     (4.0)                  $\rightarrow$ 52.7 (7.1)  \\
				\texttt{vit\_base\_patch16\_clip\_224.datacompxl}                        & \texttt{INCEPTION}           & 80.8                    (1.9)                  $\rightarrow$ 66.3                             (3.7)  & 79.4                     (2.0)                  $\rightarrow$ 66.3 (3.5)  \\
				\texttt{vit\_base\_patch16\_clip\_224.datacompxl}                        & \texttt{IMAGENET}            & 65.9                    (4.1)                  $\rightarrow$ 40.0                             (6.1)  & 65.0                     (3.4)                  $\rightarrow$ 41.5 (5.2)  \\
				\texttt{vit\_base\_patch16\_clip\_224.dfn2b}                             & \texttt{OPENAI}              & 75.4                    (4.9)                  $\rightarrow$ 55.7                             (7.0)  & 75.2                     (5.2)                  $\rightarrow$ 57.3 (8.0)  \\
				\texttt{vit\_base\_patch16\_clip\_224.dfn2b}                             & \texttt{INCEPTION}           & 82.0                    (4.1)                  $\rightarrow$ 70.0                             (8.1)  & 81.7                     (4.3)                  $\rightarrow$ 70.7 (7.6)  \\
				\texttt{vit\_base\_patch16\_clip\_224.dfn2b}                             & \texttt{IMAGENET}            & 72.9                    (6.6)                  $\rightarrow$ 51.3                             (9.4)  & 71.3                     (7.1)                  $\rightarrow$ 52.5 (9.8)  \\
				\texttt{vit\_base\_patch16\_clip\_224.metaclip\_2pt5b}                   & \texttt{OPENAI}              & 68.0                    (2.7)                  $\rightarrow$ 48.4                             (4.1)  & 67.3                     (2.5)                  $\rightarrow$ 49.7 (3.0)  \\
				\texttt{vit\_base\_patch16\_clip\_224.metaclip\_2pt5b}                   & \texttt{INCEPTION}           & 80.5                    (1.6)                  $\rightarrow$ 68.4                             (3.2)  & 79.2                     (2.2)                  $\rightarrow$ 69.5 (3.4)  \\
				\texttt{vit\_base\_patch16\_clip\_224.metaclip\_2pt5b}                   & \texttt{IMAGENET}            & 64.5                    (1.3)                  $\rightarrow$ 40.9                             (2.4)  & 64.2                     (1.4)                  $\rightarrow$ 43.3 (2.3)  \\
				\texttt{vit\_base\_patch16\_clip\_224.openai}                            & \texttt{OPENAI}              & 63.7                    (4.7)                  $\rightarrow$ 47.4                             (5.6)  & 61.9                     (4.3)                  $\rightarrow$ 49.1 (4.4)  \\
				\texttt{vit\_base\_patch16\_clip\_224.openai}                            & \texttt{INCEPTION}           & 74.6                    (3.5)                  $\rightarrow$ 65.4                             (4.5)  & 73.4                     (3.8)                  $\rightarrow$ 66.0 (5.4)  \\
				\texttt{vit\_base\_patch16\_clip\_224.openai}                            & \texttt{IMAGENET}            & 60.3                    (1.6)                  $\rightarrow$ 42.6                             (2.7)  & 59.4                     (1.4)                  $\rightarrow$ 43.4 (2.6)  \\
				\texttt{vit\_base\_patch16\_clip\_224.laion2b}                           & \texttt{OPENAI}              & 59.9                    (1.9)                  $\rightarrow$ 37.7                             (2.4)  & 58.4                     (1.7)                  $\rightarrow$ 38.5 (1.9)  \\
				\texttt{vit\_base\_patch16\_clip\_224.laion2b}                           & \texttt{INCEPTION}           & 69.2                    (4.4)                  $\rightarrow$ 54.3                             (5.5)  & 68.9                     (5.4)                  $\rightarrow$ 55.0 (6.0)  \\
				\texttt{vit\_base\_patch16\_clip\_224.laion2b}                           & \texttt{IMAGENET}            & 58.3                    (1.8)                  $\rightarrow$ 36.0                             (2.3)  & 56.9                     (1.3)                  $\rightarrow$ 37.3 (2.4)  \\
				\texttt{vit\_base\_patch16\_224.augreg\_in1k}                            & \texttt{OPENAI}              & 67.8                    (0.8)                  $\rightarrow$ 50.7                             (1.9)  & 67.0                     (1.2)                  $\rightarrow$ 51.2 (1.7)  \\
				\texttt{vit\_base\_patch16\_224.augreg\_in1k}                            & \texttt{INCEPTION}           & 67.0                    (0.5)                  $\rightarrow$ 52.4                             (1.4)  & 67.2                     (0.9)                  $\rightarrow$ 53.6 (1.0)  \\
				\texttt{vit\_base\_patch16\_224.augreg\_in1k}                            & \texttt{IMAGENET}            & 67.4                    (0.4)                  $\rightarrow$ 50.1                             (1.4)  & 67.3                     (0.8)                  $\rightarrow$ 51.0 (2.5)  \\
				\texttt{vit\_base\_patch16\_224.augreg\_in21k}                           & \texttt{OPENAI}              & 78.2                    (0.3)                  $\rightarrow$ 69.9                             (0.5)  & 77.2                     (0.6)                  $\rightarrow$ 69.4 (1.1)  \\
				\texttt{vit\_base\_patch16\_224.augreg\_in21k}                           & \texttt{INCEPTION}           & 78.6                    (0.6)                  $\rightarrow$ 71.6                             (0.4)  & 77.3                     (0.4)                  $\rightarrow$ 71.0 (0.4)  \\
				\texttt{vit\_base\_patch16\_224.augreg\_in21k}                           & \texttt{IMAGENET}            & 77.8                    (0.6)                  $\rightarrow$ 68.9                             (0.9)  & 77.1                     (1.0)                  $\rightarrow$ 68.5 (1.2)  \\
				\texttt{vit\_base\_patch16\_224.mae}                                     & \texttt{OPENAI}              & 69.3                    (0.7)                  $\rightarrow$ 39.9                             (4.2)  & 68.8                     (1.5)                  $\rightarrow$ 40.3 (4.4)  \\
				\texttt{vit\_base\_patch16\_224.mae}                                     & \texttt{INCEPTION}           & 69.1                    (0.7)                  $\rightarrow$ 43.5                             (2.8)  & 69.1                     (0.9)                  $\rightarrow$ 44.0 (2.3)  \\
				\texttt{vit\_base\_patch16\_224.mae}                                     & \texttt{IMAGENET}            & 69.1                    (0.6)                  $\rightarrow$ 40.0                             (2.1)  & 69.4                     (1.2)                  $\rightarrow$ 41.8 (1.2)  \\
				\bottomrule
			\end{tabular}
		}
	\end{center}
\end{table}

\begin{table}[t!]
	\caption{Classification accuracy (\%) for fine-tuning ViTs on the Caltech-UCSD Birds-200-2011.}
	\label{tab:clipcub}
	\begin{center}
		\resizebox{1.0\textwidth}{!}{
			\begin{tabular}{ll|cc}
				\toprule
				\textbf{Pretrained Model                                               } & \textbf{Mean-Std           } & \textbf{Val. Acc. w/ Noise                          }                                                & \textbf{Test Acc. w/ Noise                                  }             \\
				\midrule
				\texttt{vit\_base\_patch16\_clip\_224.openai\_ft\_in12k\_in1k}           & \texttt{OPENAI}              & 84.0                    (0.9)                  $\rightarrow$ 64.0                             (1.7)  & 81.3                     (1.0)                  $\rightarrow$ 61.1 (1.1)  \\
				\texttt{vit\_base\_patch16\_clip\_224.openai\_ft\_in12k\_in1k}           & \texttt{INCEPTION}           & 85.3                    (1.6)                  $\rightarrow$ 69.3                             (1.7)  & 82.7                     (1.3)                  $\rightarrow$ 67.0 (2.5)  \\
				\texttt{vit\_base\_patch16\_clip\_224.openai\_ft\_in12k\_in1k}           & \texttt{IMAGENET}            & 82.6                    (0.8)                  $\rightarrow$ 59.8                             (1.3)  & 79.7                     (1.6)                  $\rightarrow$ 56.7 (1.8)  \\
				\texttt{vit\_base\_patch16\_clip\_224.datacompxl}                        & \texttt{OPENAI}              & 83.4                    (1.1)                  $\rightarrow$ 53.6                             (2.3)  & 81.4                     (1.0)                  $\rightarrow$ 50.7 (2.6)  \\
				\texttt{vit\_base\_patch16\_clip\_224.datacompxl}                        & \texttt{INCEPTION}           & 84.7                    (0.7)                  $\rightarrow$ 59.7                             (4.5)  & 82.8                     (0.8)                  $\rightarrow$ 57.3 (3.8)  \\
				\texttt{vit\_base\_patch16\_clip\_224.datacompxl}                        & \texttt{IMAGENET}            & 83.6                    (0.9)                  $\rightarrow$ 52.2                             (2.8)  & 81.5                     (1.1)                  $\rightarrow$ 49.3 (2.6)  \\
				\texttt{vit\_base\_patch16\_clip\_224.dfn2b}                             & \texttt{OPENAI}              & 84.8                    (1.2)                  $\rightarrow$ 58.8                             (2.6)  & 83.0                     (1.3)                  $\rightarrow$ 56.4 (2.3)  \\
				\texttt{vit\_base\_patch16\_clip\_224.dfn2b}                             & \texttt{INCEPTION}           & 87.3                    (1.6)                  $\rightarrow$ 69.6                             (4.7)  & 86.0                     (2.0)                  $\rightarrow$ 67.3 (5.2)  \\
				\texttt{vit\_base\_patch16\_clip\_224.dfn2b}                             & \texttt{IMAGENET}            & 81.6                    (2.7)                  $\rightarrow$ 50.0                             (2.3)  & 79.7                     (2.8)                  $\rightarrow$ 48.1 (2.9)  \\
				\texttt{vit\_base\_patch16\_clip\_224.metaclip\_2pt5b}                   & \texttt{OPENAI}              & 83.3                    (0.5)                  $\rightarrow$ 49.5                             (3.5)  & 81.1                     (0.9)                  $\rightarrow$ 47.9 (3.2)  \\
				\texttt{vit\_base\_patch16\_clip\_224.metaclip\_2pt5b}                   & \texttt{INCEPTION}           & 85.8                    (0.9)                  $\rightarrow$ 62.1                             (2.0)  & 83.4                     (0.6)                  $\rightarrow$ 60.1 (1.8)  \\
				\texttt{vit\_base\_patch16\_clip\_224.metaclip\_2pt5b}                   & \texttt{IMAGENET}            & 81.3                    (2.5)                  $\rightarrow$ 45.3                             (4.5)  & 78.7                     (2.7)                  $\rightarrow$ 43.6 (4.2)  \\
				\texttt{vit\_base\_patch16\_clip\_224.openai}                            & \texttt{OPENAI}              & 83.4                    (0.5)                  $\rightarrow$ 60.1                             (2.4)  & 81.8                     (0.8)                  $\rightarrow$ 57.7 (2.8)  \\
				\texttt{vit\_base\_patch16\_clip\_224.openai}                            & \texttt{INCEPTION}           & 85.5                    (0.8)                  $\rightarrow$ 66.7                             (3.4)  & 83.3                     (1.3)                  $\rightarrow$ 65.1 (3.7)  \\
				\texttt{vit\_base\_patch16\_clip\_224.openai}                            & \texttt{IMAGENET}            & 75.3                    (14.1)                 $\rightarrow$ 50.1                             (13.8) & 72.7                     (13.9)                 $\rightarrow$ 47.5 (12.6) \\
				\texttt{vit\_base\_patch16\_clip\_224.laion2b}                           & \texttt{OPENAI}              & 81.4                    (1.4)                  $\rightarrow$ 52.1                             (2.2)  & 78.5                     (2.5)                  $\rightarrow$ 50.0 (2.0)  \\
				\texttt{vit\_base\_patch16\_clip\_224.laion2b}                           & \texttt{INCEPTION}           & 84.6                    (0.6)                  $\rightarrow$ 62.0                             (2.1)  & 82.2                     (0.4)                  $\rightarrow$ 59.9 (2.1)  \\
				\texttt{vit\_base\_patch16\_clip\_224.laion2b}                           & \texttt{IMAGENET}            & 81.0                    (0.4)                  $\rightarrow$ 50.1                             (0.7)  & 78.7                     (0.5)                  $\rightarrow$ 48.3 (1.1)  \\
				\texttt{vit\_base\_patch16\_224.augreg\_in1k}                            & \texttt{OPENAI}              & 83.4                    (0.4)                  $\rightarrow$ 67.6                             (0.7)  & 81.7                     (0.8)                  $\rightarrow$ 65.8 (0.7)  \\
				\texttt{vit\_base\_patch16\_224.augreg\_in1k}                            & \texttt{INCEPTION}           & 83.9                    (0.5)                  $\rightarrow$ 69.3                             (0.7)  & 81.8                     (0.3)                  $\rightarrow$ 67.8 (0.5)  \\
				\texttt{vit\_base\_patch16\_224.augreg\_in1k}                            & \texttt{IMAGENET}            & 83.7                    (0.4)                  $\rightarrow$ 67.5                             (1.2)  & 81.8                     (0.2)                  $\rightarrow$ 65.9 (0.7)  \\
				\texttt{vit\_base\_patch16\_224.augreg\_in21k}                           & \texttt{OPENAI}              & 89.6                    (0.2)                  $\rightarrow$ 84.0                             (0.5)  & 88.9                     (0.5)                  $\rightarrow$ 83.4 (0.4)  \\
				\texttt{vit\_base\_patch16\_224.augreg\_in21k}                           & \texttt{INCEPTION}           & 89.6                    (0.2)                  $\rightarrow$ 84.9                             (0.7)  & 88.7                     (0.4)                  $\rightarrow$ 83.7 (0.6)  \\
				\texttt{vit\_base\_patch16\_224.augreg\_in21k}                           & \texttt{IMAGENET}            & 89.5                    (0.2)                  $\rightarrow$ 83.9                             (0.1)  & 88.9                     (0.3)                  $\rightarrow$ 83.4 (0.7)  \\
				\texttt{vit\_base\_patch16\_224.mae}                                     & \texttt{OPENAI}              & 76.7                    (0.5)                  $\rightarrow$ 39.3                             (4.4)  & 74.1                     (0.6)                  $\rightarrow$ 36.7 (3.9)  \\
				\texttt{vit\_base\_patch16\_224.mae}                                     & \texttt{INCEPTION}           & 74.0                    (0.3)                  $\rightarrow$ 41.1                             (4.2)  & 72.5                     (1.1)                  $\rightarrow$ 38.9 (4.4)  \\
				\texttt{vit\_base\_patch16\_224.mae}                                     & \texttt{IMAGENET}            & 76.4                    (0.9)                  $\rightarrow$ 38.0                             (1.5)  & 74.4                     (0.5)                  $\rightarrow$ 35.6 (1.7)  \\
				\bottomrule
			\end{tabular}
		}
	\end{center}
\end{table}

\begin{table}[t!]
	\caption{Classification accuracy (\%) for fine-tuning ViTs on the Stanford-Cars.}
	\label{tab:clipcar}
	\begin{center}
		\resizebox{1.0\textwidth}{!}{
			\begin{tabular}{ll|cc}
				\toprule
				\textbf{Pretrained Model                                               } & \textbf{Mean-Std           } & \textbf{Val. Acc. w/ Noise                          }                                               & \textbf{Test Acc. w/ Noise                                  }            \\
				\midrule
				\texttt{vit\_base\_patch16\_clip\_224.openai\_ft\_in12k\_in1k}           & \texttt{OPENAI}              & 83.8                    (0.1)                  $\rightarrow$ 71.0                             (1.4) & 83.0                     (0.7)                  $\rightarrow$ 69.7 (0.7) \\
				\texttt{vit\_base\_patch16\_clip\_224.openai\_ft\_in12k\_in1k}           & \texttt{INCEPTION}           & 87.3                    (1.2)                  $\rightarrow$ 77.7                             (2.1) & 86.2                     (1.3)                  $\rightarrow$ 76.3 (2.2) \\
				\texttt{vit\_base\_patch16\_clip\_224.openai\_ft\_in12k\_in1k}           & \texttt{IMAGENET}            & 81.1                    (1.6)                  $\rightarrow$ 63.8                             (2.2) & 80.7                     (1.9)                  $\rightarrow$ 64.6 (2.2) \\
				\texttt{vit\_base\_patch16\_clip\_224.datacompxl}                        & \texttt{OPENAI}              & 90.1                    (0.7)                  $\rightarrow$ 76.1                             (1.7) & 89.2                     (0.6)                  $\rightarrow$ 75.3 (1.5) \\
				\texttt{vit\_base\_patch16\_clip\_224.datacompxl}                        & \texttt{INCEPTION}           & 91.3                    (0.2)                  $\rightarrow$ 80.9                             (0.8) & 90.4                     (0.6)                  $\rightarrow$ 79.4 (1.0) \\
				\texttt{vit\_base\_patch16\_clip\_224.datacompxl}                        & \texttt{IMAGENET}            & 89.8                    (1.4)                  $\rightarrow$ 75.4                             (3.8) & 89.1                     (1.4)                  $\rightarrow$ 74.3 (3.8) \\
				\texttt{vit\_base\_patch16\_clip\_224.dfn2b}                             & \texttt{OPENAI}              & 91.1                    (0.5)                  $\rightarrow$ 78.9                             (2.5) & 90.2                     (0.5)                  $\rightarrow$ 77.8 (2.2) \\
				\texttt{vit\_base\_patch16\_clip\_224.dfn2b}                             & \texttt{INCEPTION}           & 94.2                    (1.1)                  $\rightarrow$ 88.7                             (2.2) & 93.2                     (1.0)                  $\rightarrow$ 87.6 (2.8) \\
				\texttt{vit\_base\_patch16\_clip\_224.dfn2b}                             & \texttt{IMAGENET}            & 91.1                    (1.8)                  $\rightarrow$ 78.8                             (5.0) & 90.7                     (1.4)                  $\rightarrow$ 77.6 (5.4) \\
				\texttt{vit\_base\_patch16\_clip\_224.metaclip\_2pt5b}                   & \texttt{OPENAI}              & 87.7                    (0.7)                  $\rightarrow$ 67.7                             (1.7) & 86.9                     (0.7)                  $\rightarrow$ 66.4 (1.7) \\
				\texttt{vit\_base\_patch16\_clip\_224.metaclip\_2pt5b}                   & \texttt{INCEPTION}           & 91.1                    (0.3)                  $\rightarrow$ 78.5                             (1.3) & 90.2                     (0.4)                  $\rightarrow$ 77.3 (1.6) \\
				\texttt{vit\_base\_patch16\_clip\_224.metaclip\_2pt5b}                   & \texttt{IMAGENET}            & 87.1                    (1.3)                  $\rightarrow$ 64.7                             (2.0) & 86.1                     (1.7)                  $\rightarrow$ 63.2 (2.3) \\
				\texttt{vit\_base\_patch16\_clip\_224.openai}                            & \texttt{OPENAI}              & 85.6                    (3.3)                  $\rightarrow$ 73.5                             (4.1) & 85.3                     (3.1)                  $\rightarrow$ 72.4 (3.9) \\
				\texttt{vit\_base\_patch16\_clip\_224.openai}                            & \texttt{INCEPTION}           & 89.8                    (0.4)                  $\rightarrow$ 81.0                             (1.1) & 89.5                     (0.4)                  $\rightarrow$ 80.2 (0.7) \\
				\texttt{vit\_base\_patch16\_clip\_224.openai}                            & \texttt{IMAGENET}            & 85.2                    (1.6)                  $\rightarrow$ 70.1                             (3.0) & 84.2                     (1.3)                  $\rightarrow$ 69.0 (3.0) \\
				\texttt{vit\_base\_patch16\_clip\_224.laion2b}                           & \texttt{OPENAI}              & 84.8                    (2.4)                  $\rightarrow$ 65.6                             (4.2) & 84.1                     (2.3)                  $\rightarrow$ 65.3 (3.7) \\
				\texttt{vit\_base\_patch16\_clip\_224.laion2b}                           & \texttt{INCEPTION}           & 89.9                    (0.8)                  $\rightarrow$ 78.4                             (2.3) & 88.8                     (0.9)                  $\rightarrow$ 77.0 (2.2) \\
				\texttt{vit\_base\_patch16\_clip\_224.laion2b}                           & \texttt{IMAGENET}            & 79.9                    (4.7)                  $\rightarrow$ 54.5                             (6.6) & 79.5                     (5.1)                  $\rightarrow$ 54.9 (7.6) \\
				\texttt{vit\_base\_patch16\_224.augreg\_in1k}                            & \texttt{OPENAI}              & 82.8                    (0.5)                  $\rightarrow$ 67.4                             (1.0) & 81.6                     (0.4)                  $\rightarrow$ 66.3 (0.9) \\
				\texttt{vit\_base\_patch16\_224.augreg\_in1k}                            & \texttt{INCEPTION}           & 83.2                    (0.6)                  $\rightarrow$ 69.2                             (1.1) & 81.6                     (0.5)                  $\rightarrow$ 67.5 (1.3) \\
				\texttt{vit\_base\_patch16\_224.augreg\_in1k}                            & \texttt{IMAGENET}            & 83.0                    (0.3)                  $\rightarrow$ 66.2                             (1.4) & 81.5                     (0.2)                  $\rightarrow$ 65.1 (1.6) \\
				\texttt{vit\_base\_patch16\_224.augreg\_in21k}                           & \texttt{OPENAI}              & 89.7                    (0.2)                  $\rightarrow$ 82.6                             (0.5) & 88.5                     (0.3)                  $\rightarrow$ 81.4 (0.5) \\
				\texttt{vit\_base\_patch16\_224.augreg\_in21k}                           & \texttt{INCEPTION}           & 89.9                    (0.2)                  $\rightarrow$ 84.2                             (0.4) & 88.3                     (0.3)                  $\rightarrow$ 83.3 (0.7) \\
				\texttt{vit\_base\_patch16\_224.augreg\_in21k}                           & \texttt{IMAGENET}            & 89.9                    (0.5)                  $\rightarrow$ 81.9                             (0.5) & 88.6                     (0.6)                  $\rightarrow$ 81.1 (0.3) \\
				\texttt{vit\_base\_patch16\_224.mae}                                     & \texttt{OPENAI}              & 80.4                    (0.5)                  $\rightarrow$ 61.1                             (1.5) & 78.0                     (0.6)                  $\rightarrow$ 58.5 (0.9) \\
				\texttt{vit\_base\_patch16\_224.mae}                                     & \texttt{INCEPTION}           & 80.3                    (0.3)                  $\rightarrow$ 61.7                             (1.0) & 77.6                     (0.5)                  $\rightarrow$ 59.3 (0.8) \\
				\texttt{vit\_base\_patch16\_224.mae}                                     & \texttt{IMAGENET}            & 80.6                    (0.4)                  $\rightarrow$ 58.1                             (2.2) & 78.3                     (0.3)                  $\rightarrow$ 56.7 (2.4) \\
				\bottomrule
			\end{tabular}
		}
	\end{center}
\end{table}

\section{Extension to Other Noise Models}
\label{app:other-noise-models}
We select Gaussian noise due to its approximation of aggregate perturbations by the central limit theorem and its prevalence in real-world imaging, such as sensor readout and thermal noise. Here, we explain how our main findings---noise attenuation by larger stem kernels and smaller input resolution (Theorems~\ref{thm:lowpassstems}, \ref{thm:resolution}), the pooling comparison (Theorem~\ref{thm:pooling}), and the normalization effect (Theorem~\ref{thm:clip})---extend beyond Gaussian noise.

\paragraph{Setup.} We continue to use $k$ for a filter side length. For pooling windows, we use $w$ for side length and $m=w^2$ for the number of elements.

\paragraph{Poisson noise.} Let $S$ be the filter support with $|S|=m=k^2$. Let $h=\{h_t\}_{t\in S}$ denote the linear filter coefficients on $S$, and $Y_t\sim\mathrm{Poisson}(x_t)$ independent. For a locally constant intensity on the filter support, where $x_t \approx \bar x$ in a smooth patch, we have
\begin{align}
	\Var[\sum_t h_t Y_t] = \sum_t h_t^2 \Var[Y_t] = \sum_t h_t^2 x_t = \bar x \sum_t h_t^2 + \sum_t h_t^2 (x_t-\bar x) \approx \bar x \|h\|_2^2,
\end{align}
so the per-output-pixel variance inherits the $k^{-2}$ and $s^{-2}$ scalings up to the local factor $\bar x$. Applying the Anscombe transform $A(y)=2\sqrt{y+3/8}$ approximately stabilizes the Poisson variance to $\approx 1$, after which Gaussian-based methods are applicable \citep{anscombe1948transformation}.

\paragraph{Salt-and-pepper noise.} Under the symmetric model where each pixel is replaced by either $0$ or $1$ with probability $q$ and a locally constant patch with mean $\bar x$, we have
\[
	\mathbb{E}[\text{avg error}] = q(1/2-\bar x),
	\quad \Var[\text{avg error}]=O(1/m).
\]
Max pooling tends to amplify these impulses. As a robust alternative, median pooling recovers the clean value in constant patches when contamination is lower than $50\%$ and is $1$-Lipschitz with respect to $\ell_\infty$; trimmed means are another option.

\paragraph{Normalization and Lipschitz sensitivity.} The pixel-space Lipschitz bound in Theorem~\ref{thm:clip} does not depend on the specific noise type, so smaller per-channel normalization stds increase the worst-case sensitivity equally for Gaussian and non-Gaussian perturbations.

\section{Are there other factors that cause vulnerabilities of CLIP?}
\label{app:ablation}
We investigated other factors that might possibly address the vulnerability of CLIP. However, the vulnerability of CLIP could not be fully addressed by other factors examined below.

\paragraph{How about swapping pretrained weights with supervised ViT?} Answer: No. Differences in training datasets and losses would lead to different pretrained weights for CLIP ViTs. Assuming that certain dataset or loss properties, or equivalently certain properties of the pretrained weights of CLIP ViTs, lead to vulnerabilities, we performed controlled experiments to swap parts of them with those of supervised ViTs. Specifically, we swapped pretrained weights of each block in \texttt{vit\_base\_patch16\_clip\_224.openai} with those of \texttt{vit\_base\_patch16\_224.augreg2\_in21k\_ft\_in1k} to see which module weights determine the robustness against Gaussian noise (\tabref{tab:swap}). Although swapping pretrained weights partially addressed the vulnerability of CLIP ViTs in certain cases near the last block such as targeting block12, the improvements were not as significant as the approach of replacing mean-std constants. Furthermore, the improvement depended on the specific weight choice in the target block; block12.mlp.fc2.weight improved robustness, whereas block12.norm1.weight did not. When we swapped multiple weights such as block12.\{mlp.fc2, mlp.fc1, norm2\}, the performance rather degraded, which indicates that improvement is not guaranteed.

\begin{table}[t!]
	\caption{Results of swapping pretrained weights in CLIP ViT. The accuracy with Gaussian noise partially improved.}
	\label{tab:swap}
	\begin{center}
		\begin{tabular}{l|cc}
			\toprule
			\textbf{Swap                         } & \textbf{Val. Acc. $\rightarrow$ w/ Noise                                                               } & \textbf{Test Acc. $\rightarrow$ w/ Noise                                                                } \\
			\midrule
			stem                                   & 53.1                    (1.2)                   $\rightarrow$ 36.3                      (0.9)            & 51.3                      (1.5)                 $\rightarrow$ 35.4                       (1.2)            \\
			block1                                 & 81.3                    (19.1)                  $\rightarrow$ 48.6                      (14.8)           & 80.1                      (19.9)                $\rightarrow$ 46.2                       (14.1)           \\
			block2                                 & 41.9                    (3.4)                   $\rightarrow$ 20.1                      (1.6)            & 40.7                      (3.9)                 $\rightarrow$ 18.8                       (1.6)            \\
			block3                                 & 68.6                    (13.6)                  $\rightarrow$ 29.9                      (4.5)            & 67.9                      (13.7)                $\rightarrow$ 28.7                       (4.4)            \\
			block4                                 & 80.2                    (6.8)                   $\rightarrow$ 29.6                      (7.5)            & 79.8                      (6.4)                 $\rightarrow$ 28.6                       (7.9)            \\
			block5                                 & 77.0                    (4.0)                   $\rightarrow$ 30.0                      (4.0)            & 77.3                      (3.2)                 $\rightarrow$ 29.0                       (3.4)            \\
			block6                                 & 84.7                    (0.8)                   $\rightarrow$ 39.9                      (1.9)            & 84.0                      (0.5)                 $\rightarrow$ 37.8                       (2.2)            \\
			block7                                 & 87.8                    (0.4)                   $\rightarrow$ 45.0                      (0.8)            & 86.5                      (0.6)                 $\rightarrow$ 44.9                       (1.2)            \\
			block8                                 & 90.5                    (0.4)                   $\rightarrow$ 49.9                      (2.7)            & 88.4                      (0.6)                 $\rightarrow$ 48.4                       (1.6)            \\
			block9                                 & 90.7                    (0.2)                   $\rightarrow$ 56.5                      (3.9)            & 90.1                      (0.4)                 $\rightarrow$ 54.8                       (2.2)            \\
			block10                                & 91.5                    (0.4)                   $\rightarrow$ 62.3                      (3.3)            & 91.0                      (0.4)                 $\rightarrow$ 60.4                       (3.5)            \\
			block11                                & 91.4                    (0.4)                   $\rightarrow$ 59.6                      (5.1)            & 90.7                      (0.9)                 $\rightarrow$ 58.1                       (6.0)            \\
			block12                                & 91.5                    (0.5)                   $\rightarrow$ 62.3                      (4.9)            & 91.4                      (0.6)                 $\rightarrow$ 60.8                       (4.4)            \\
			head                                   & 82.8                    (7.6)                   $\rightarrow$ 48.4                      (7.5)            & 82.3                      (6.8)                 $\rightarrow$ 48.2                       (6.6)            \\
			\midrule
			Baseline (\texttt{IMAGENET})           & 91.2                    (0.5)                   $\rightarrow$ 58.5                      (4.0)            & 90.7                      (0.8)                 $\rightarrow$ 58.4                       (4.3)            \\
			Ours (\texttt{INCEPTION})              & 92.5                    (0.3)                   $\rightarrow$ 71.7                      (1.0)            & 91.9                      (0.6)                 $\rightarrow$ 70.2                       (1.2)            \\
			\bottomrule
		\end{tabular}
	\end{center}
\end{table}

\begin{table}[t!]
	\caption{Results of swapping specific weights in block12. Swapping multiple weights did not ensure improved robustness.}
	\label{tab:swapmicro}
	\begin{center}
		\begin{tabular}{l|cc}
			\toprule
			\textbf{Swap                         } & \textbf{Val. Acc. $\rightarrow$ w/ Noise                                                               } & \textbf{Test Acc. $\rightarrow$ w/ Noise                                                                } \\
			\midrule
			block12.norm1.weight                   & 91.0                    (1.3)                    $\rightarrow$ 56.8                      (8.2)           & 90.1                     (1.3)                 $\rightarrow$ 56.6                       (8.6)             \\
			block12.norm1.bias                     & 91.5                    (0.9)                    $\rightarrow$ 59.5                      (5.7)           & 90.7                     (1.4)                 $\rightarrow$ 58.1                       (6.1)             \\
			block12.attn.qkv.weight                & 91.0                    (0.5)                    $\rightarrow$ 60.5                      (1.3)           & 90.3                     (0.8)                 $\rightarrow$ 59.0                       (1.4)             \\
			block12.attn.qkv.bias                  & 91.0                    (0.9)                    $\rightarrow$ 58.7                      (3.6)           & 90.0                     (0.9)                 $\rightarrow$ 58.4                       (3.7)             \\
			block12.attn.proj.weight               & 92.1                    (0.6)                    $\rightarrow$ 59.8                      (5.6)           & 91.3                     (0.9)                 $\rightarrow$ 59.8                       (5.8)             \\
			block12.attn.proj.bias                 & 90.9                    (1.0)                    $\rightarrow$ 58.3                      (5.4)           & 90.2                     (1.2)                 $\rightarrow$ 57.8                       (5.0)             \\
			block12.norm2.weight                   & 91.8                    (0.7)                    $\rightarrow$ 62.7                      (4.3)           & 90.7                     (0.9)                 $\rightarrow$ 61.1                       (3.7)             \\
			block12.norm2.bias                     & 91.4                    (0.9)                    $\rightarrow$ 60.8                      (3.4)           & 90.8                     (0.7)                 $\rightarrow$ 59.7                       (3.4)             \\
			block12.mlp.fc1.weight                 & 91.4                    (1.0)                    $\rightarrow$ 61.0                      (7.6)           & 91.0                     (1.4)                 $\rightarrow$ 60.3                       (7.0)             \\
			block12.mlp.fc1.bias                   & 91.2                    (1.3)                    $\rightarrow$ 58.3                      (3.4)           & 90.4                     (1.5)                 $\rightarrow$ 57.4                       (3.7)             \\
			block12.mlp.fc2.weight                 & 91.3                    (0.4)                    $\rightarrow$ 65.2                      (2.2)           & 90.5                     (0.3)                 $\rightarrow$ 63.8                       (2.1)             \\
			block12.mlp.fc2.bias                   & 91.4                    (0.7)                    $\rightarrow$ 58.8                      (5.0)           & 90.7                     (0.7)                 $\rightarrow$ 58.2                       (5.4)             \\
			\midrule
			block12.mlp.fc2                        & 90.8                    (0.6)                    $\rightarrow$ 58.7                      (3.2)           & 90.0                     (0.4)                 $\rightarrow$ 57.7                       (3.1)             \\
			block12.mlp.fc2 \& mlp.fc1             & 91.9                    (1.1)                    $\rightarrow$ 64.2                      (4.3)           & 91.6                     (0.8)                 $\rightarrow$ 63.8                       (5.0)             \\
			block12.mlp.fc2 \& mlp.fc1 \& norm2    & 91.0                    (0.5)                    $\rightarrow$ 55.5                      (4.3)           & 90.0                     (1.4)                 $\rightarrow$ 54.0                       (5.2)             \\
			\bottomrule
		\end{tabular}
	\end{center}
\end{table}

\paragraph{How about architectural differences such as norm\_pre?} Answer: No. Although the architecture is almost the same for CLIP ViT and supervised ViTs, one difference is that CLIP ViTs insert additional LayerNorm in the patch embedding before the transformer blocks start, which we refer to as norm\_pre. Assuming that the use of norm\_pre causes vulnerability, we performed controlled experiments training ViTs with and without norm\_pre (\tabref{tab:normpre}). Nevertheless, the ViT with norm\_pre, which corresponds to the identical architecture of CLIP ViTs, rather exhibited improved performance against Gaussian noise, which indicates that norm\_pre does not lead to the vulnerability observed in CLIP ViTs.

\begin{table}[t!]
	\caption{Results on different ViT architectures with and without norm\_pre. The use of norm\_pre did not bring vulnerability.}
	\label{tab:normpre}
	\begin{center}
		\begin{tabular}{l|cc}
			\toprule
			\textbf{Architecture  } & \textbf{Top-1 $\rightarrow$ w/ Noise                  } & \textbf{Top-5 $\rightarrow$ w/ Noise                  } \\
			\midrule
			w/o norm\_pre           & 77.76                     $\rightarrow$ 47.15           & 93.84                     $\rightarrow$ 68.96           \\
			w/ norm\_pre            & 78.84                     $\rightarrow$ 54.22           & 94.14                     $\rightarrow$ 76.13           \\
			\bottomrule
		\end{tabular}
	\end{center}
\end{table}

\section{Limitations}
\label{app:lim}
Our study focuses exclusively on robustness to additive Gaussian noise, which, although common in imaging pipelines, does not encompass all real-world corruptions, such as adversarial perturbations, weather effects, or sensor-specific artifacts. Also, the empirical findings are derived from pretrained models in the \texttt{timm} library and controlled experiments on specific datasets, which may represent a limitation in their generalizability to other domains like medical imaging or video processing. Future work could extend these insights to broader corruptions, architectures, and datasets.

\section{Empirical Simulations for Testing Assumption and Theorems}
\label{app:simul}
We performed module-level simulations to compare empirical results with the expected values stated in the assumption and theorems. All simulation results closely matched the theoretical expectations. The used Python source code is available in the supplementary materials.

\paragraph{A\textsubscript{roll}} We embed each $k{\times}k$ kernel into a $512{\times}512$ grid, compute the normalized spectrum $|\widehat K|$, form its $\ell_2$-radial profile, and fit the low-pass envelope $\phi_k(r)=(1+\beta k r)^{-(1+\delta)}$ by weighted log-MSE (\tabref{tab:a}). For representative radii of $\pi/8, \pi/4, \pi/2$, we observed that the empirical magnitudes lie below the fitted envelopes, which verifies this assumption in practice.

\begin{table}[t!]
	\caption{The upper block reports the results for the box kernel. The lower block reports the results for the Gaussian kernel.}
	\label{tab:a}
	\begin{center}
		\begin{tabular}{l|cc}
			\toprule
			\textbf{$r$ (rad) } & \textbf{ Empirical                ($|\widehat K_k(\boldsymbol\omega)|$) } & \textbf{Theoretical ($\phi_k(\|\boldsymbol\omega\|)$) } \\
			\midrule
			0.3962              & 0.0297134                                                                 & 0.0570019                                               \\
			0.7886              & 0.0129235                                                                 & 0.0167515                                               \\
			1.5661              & 0.0040941                                                                 & 0.0042482                                               \\
			\midrule
			0.3962              & 0.0226295                                                                 & 0.0326660                                               \\
			0.7886              & 0.0059380                                                                 & 0.0068950                                               \\
			1.6031              & 0.0007060                                                                 & 0.0010978                                               \\
			\bottomrule
		\end{tabular}
	\end{center}
\end{table}

\paragraph{Theorem~\ref{thm:lowpassstems}} \tabref{tab:t1} reports the Monte Carlo estimate of the per-pixel noise gain $\gamma$ for a $k\times k$ normalized box filter. We convolve i.i.d. $\mathcal{N}(0,\sigma^{2})$ noise with the filter via FFT-based circular convolution and compare the empirical $\hat{\gamma}$ with the theoretical $||K_k||_{F}^{2}=1/k^{2}$, where $K_k$ is the normalized $k\times k$ box stem kernel.

\begin{table}[t!]
	\caption{Measured $\gamma$ for a $k \times k$ kernel. Stds for 100 simulations are reported.}
	\label{tab:t1}
	\begin{center}
		\begin{tabular}{l|cc}
			\toprule
			\textbf{$k$ } & \textbf{Empirical                } & \textbf{Theoretical } \\
			\midrule
			4             & 0.062535 $\pm$  0.001019           & 0.062500              \\
			8             & 0.015584 $\pm$  0.000484           & 0.015625              \\
			12            & 0.006911 $\pm$  0.000296           & 0.006944              \\
			16            & 0.003888 $\pm$  0.000212           & 0.003906              \\
			20            & 0.002487 $\pm$  0.000169           & 0.002500              \\
			24            & 0.001728 $\pm$  0.000148           & 0.001736              \\
			28            & 0.001271 $\pm$  0.000129           & 0.001276              \\
			32            & 0.000973 $\pm$  0.000115           & 0.000977              \\
			\bottomrule
		\end{tabular}
	\end{center}
\end{table}

\paragraph{Theorem~\ref{thm:resolution}} \tabref{tab:t2} reports Monte Carlo estimates of the per-output-pixel noise gain $\gamma_{\downarrow}(s)$ under anti-aliased downsampling by a factor $s$, using a $g(s) \times g(s)$ normalized box prefilter and decimation. We compare the empirical $\hat{\gamma}_{\downarrow}$ with the theoretical $||K_{g(s)}||_F^2=1/g(s)^2$, implying $\sim s^{-2}$ when $g(s)\propto s$.

\begin{table}[t!]
	\caption{Measured $\gamma_{\downarrow}(s)$ for anti-aliased downsampling by a factor of $s$. Stds for 100 simulations are reported.}
	\label{tab:t2}
	\begin{center}
		\begin{tabular}{l|cc}
			\toprule
			\textbf{$s$ } & \textbf{Empirical                } & \textbf{Theoretical } \\
			\midrule
			1             & 0.999667 $\pm$ 0.006170            & 1.000000              \\
			2             & 0.250114 $\pm$ 0.002767            & 0.250000              \\
			3             & 0.110970 $\pm$ 0.001869            & 0.111111              \\
			4             & 0.062447 $\pm$ 0.001483            & 0.062500              \\
			6             & 0.027772 $\pm$ 0.000880            & 0.027778              \\
			8             & 0.015567 $\pm$ 0.000649            & 0.015625              \\
			12            & 0.006945 $\pm$ 0.000433            & 0.006944              \\
			16            & 0.003925 $\pm$ 0.000359            & 0.003906              \\
			\bottomrule
		\end{tabular}
	\end{center}
\end{table}

\paragraph{Theorem~\ref{thm:pooling}} The results in \tabref{tab:t3} were obtained via Monte Carlo with 200k trials on $S+\eta$ with $\eta\sim\mathcal N(0,1)$ and $k=w^2$. Theoretical entries correspond to $\sigma^2/k$ for average pooling and Gauss-Hermite quadrature for $E[M_k]$ and $E[M_k^2]$ to compute max-pooling bias and MSE.

\begin{table}[t!]
	\caption{Comparison of empirical (Em.) and theoretical (Th.) results for average and max poolings}
	\label{tab:t3}
	\begin{center}
		\resizebox{1.0\textwidth}{!}{
			\begin{tabular}{ll|cccccc}
				\toprule
				\textbf{$w$ } & \textbf{$k$ } & \textbf{Avg MSE (Em.) } & \textbf{Avg MSE (Th.) } & \textbf{Max Bias (Em.) } & \textbf{Max Bias (Th.) } & \textbf{Max MSE (Em.) } & \textbf{Max MSE (Th.) } \\
				\midrule
				2             & 4             & 0.25083                 & 0.25000                 & 1.02936                  & 1.02938                  & 1.55372                 & 1.55133                 \\
				3             & 9             & 0.11049                 & 0.11111                 & 1.48535                  & 1.48501                  & 2.56409                 & 2.56262                 \\
				4             & 16            & 0.06265                 & 0.06250                 & 1.76524                  & 1.76599                  & 3.41148                 & 3.41374                 \\
				5             & 25            & 0.04006                 & 0.04000                 & 1.96619                  & 1.96531                  & 4.12369                 & 4.12097                 \\
				6             & 36            & 0.02779                 & 0.02778                 & 2.11722                  & 2.11812                  & 4.71818                 & 4.72069                 \\
				\bottomrule
			\end{tabular}
		}
	\end{center}
\end{table}

\paragraph{Theorem~\ref{thm:clip}} We construct random linear maps \(A\) with \(\|A\|_2=L_z = 3.0\), compose them with \(D=\mathrm{diag}(1/\boldsymbol\sigma)\) from \texttt{INCEPTION} and \texttt{OPENAI}, and estimate \(\|AD\|_2\) via power iteration. \tabref{tab:t4} compares the theoretical bound \(L_z/\sigma_{\min}\) with the measured norm and their ratio, confirming the predicted \(1/\sigma_{\min}\) scaling.

\begin{table}[t!]
	\caption{Measured \(\|AD\|_2\) closely matches the bound \(L_z/\sigma_{\min}\) for random \(A\) under \texttt{INCEPTION} and \texttt{OPENAI}, confirming the \(1/\sigma_{\min}\) scaling}
	\label{tab:t4}
	\begin{center}
		\begin{tabular}{c|ccc}
			\toprule
			\textbf{Constants          } & \textbf{Bound $L_z/\sigma_{\min}$ } & \textbf{Measured $\norm{AD}_2$ } & \textbf{$\dfrac{L_z/\sigma_{\min}}{\norm{AD}_2}$ } \\
			\midrule
			\texttt{INCEPTION}           & 6.000000                            & 5.998213                         & 1.000298                                           \\
			\texttt{OPENAI}              & 11.480943                           & 11.200055                        & 1.025079                                           \\
			\bottomrule
		\end{tabular}
	\end{center}
\end{table}

\section{Rank difference as a robustness proxy}
\label{app:rankdiff}
Here, we denote the rank difference (RankDiff) at severity $\tau>0$,
\[
	\mathrm{RankDiff}_i(\tau) \coloneqq \mathrm{rank}_\tau(i)-\mathrm{rank}_0(i),
\]
where $\mathrm{rank}_\tau$ orders models by accuracy at $\tau$, so a more negative $\mathrm{RankDiff}_i$ indicates a robustness gain. In this section, we show that RankDiff is a principled, scale-free proxy because it aggregates pairwise rank flips caused by robustness slope differences.

\paragraph{Assumption (local linearity with quadratic remainder).} For model $i\in\{1,\dots,M\}$, let $A_i(\tau)$ be its accuracy at noise severity $\tau\ge0$ and $p_i\coloneqq A_i(0)$. For some $\tau_0>0$,
\begin{equation}
	\label{eq:locallin}
	A_i(\tau)=p_i-\rho_i \tau+r_i(\tau),\qquad \rho_i\ge0,\qquad |r_i(\tau)|\le L_i \tau^2\quad(\tau\in[0,\tau_0]),
\end{equation}
where $\rho_i$ is the first-order robustness slope, and $L_i$ bounds the curvature. The linear accuracy drop after applying a specific corruption has been verified in several studies \citep{DBLP:conf/icml/RechtRSS19,DBLP:conf/iclr/HendrycksD19}.

\paragraph{Pairwise flip rule.} For any $i\neq j$,
\begin{equation}
	\label{eq:pair}
	A_i(\tau)-A_j(\tau)=(p_i-p_j)-(\rho_i-\rho_j)\tau+\varepsilon_{ij}(\tau),\qquad |\varepsilon_{ij}(\tau)|\le (L_i+L_j)\tau^2.
\end{equation}
If $\rho_i\neq \rho_j$, the first-order flip threshold is
\begin{equation}
	\label{eq:tstar}
	\tau^\star_{ij} \coloneqq \frac{p_i-p_j}{\rho_i-\rho_j}.
\end{equation}
When $\tau^\star_{ij}\in(0,\tau_0]$ and the margin condition
\begin{equation}
	\label{eq:margin}
	|(p_i-p_j)-(\rho_i-\rho_j)\tau|>(L_i+L_j)\tau^2
\end{equation}
holds at $\tau$, the sign of $A_i(\tau)-A_j(\tau)$ is determined by the first-order term: Model $i$ outranks $j$ at $\tau$ if and only if $\tau>\tau^\star_{ij}$ when $\rho_i<\rho_j$ (\figref{fig:rankflip}).

\begin{figure}[h!]
	\centering
	\includegraphics[width=.99\linewidth]{./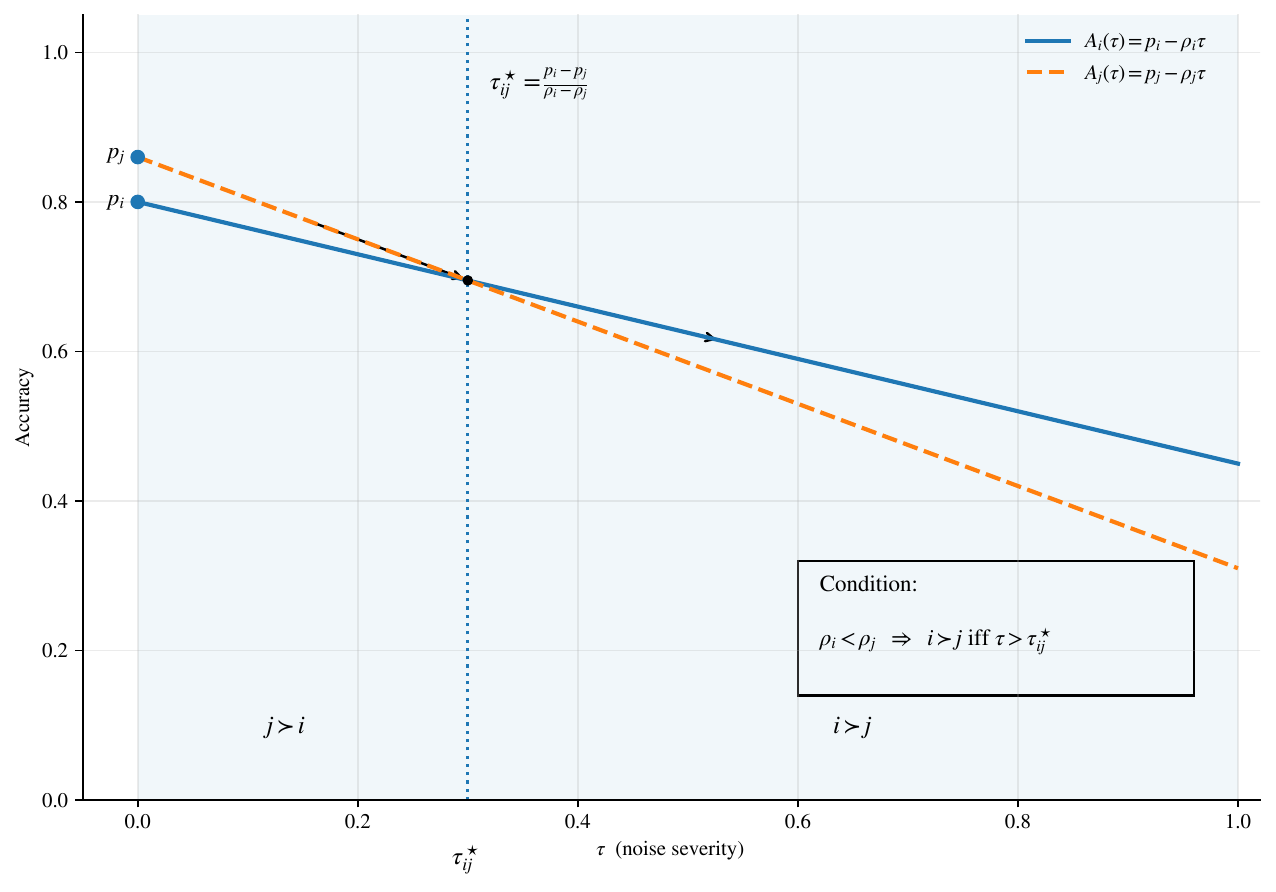}
	\caption{Illustration of a rank flip}
	\label{fig:rankflip}
\end{figure}

\paragraph{RankDiff counts robustness-driven flips.} Let $\mathcal{A}_i(\tau)\coloneqq\{j\neq i:  |(p_i-p_j)-(\rho_i-\rho_j)\tau|\le (L_i+L_j)\tau^2\}$ be the set of ambiguous pairs at $\tau$. Then, we have
\begin{equation}\label{eq:rd-main}
	| \mathrm{RankDiff}_i(\tau) + \sum_{j\neq i} \operatorname{sgn}(\rho_j-\rho_i) \mathbf{1}\{\rho_i\neq \rho_j, 0<\tau^\star_{ij}\le \tau\} | \le |\mathcal{A}_i(\tau)|.
\end{equation}

In particular, if \eqref{eq:margin} holds for all $j\neq i$ at $\tau$, equality holds in \eqref{eq:rd-main}: $\mathrm{RankDiff}_i(\tau)$ equals the net number of pairwise flips caused by having a smaller slope $\rho_i$.

Under the empirically observed near-linearity of accuracy-severity curves within the tested range, RankDiff is a scale-free robustness score: It ignores absolute calibration of accuracies and rewards models with smaller slopes $\rho_i$ by counting the robustness-driven improvements in relative order.

Although we wrote that a negative RankDiff indicates better robustness, we are not saying that understanding the absolute value of RankDiff would capture robustness; to clarify our approach, we rather compare pairwise architectures to compute their corresponding $\Delta \mathrm{RankDiff}$ to understand the relative difference in robustness.

\section{Experimental Setup}
\label{app:extsetup}
Here, we present the experimental details and full hyperparameters for the implementations.

\paragraph{Gaussian Noise} We injected Gaussian noise into images using the \texttt{GaussNoise()} function from the \texttt{Albumentations} library \citep{DBLP:journals/information/BuslaevIKPDK20}. By default, we used the transform \texttt{A.GaussNoise(std\_range=(0.1, 0.22), p=1.0)} with a scale factor with range (0.1, 0.22), which determines the fraction of the maximum value, \ie, 255 for uint8 images or 1.0 for float images. For ImageNet-1K experiments, we used a scale factor with a range of (0.2, 0.44). The probability of applying Gaussian noise was set to 1. Note that Gaussian noise was applied only during evaluation, \ie, during the test phase, not during the training phase.

\paragraph{ResNet Experiments} We targeted multi-class classification tasks on the Oxford-IIIT Pet, Caltech-101, FGVC-Aircraft, Caltech-UCSD Birds-200-2011, and Stanford Cars datasets. The Oxford-IIIT Pet dataset contains 7K pet images from 37 classes; the Caltech-101 dataset includes 9K object images from 101 classes with a background category; the FGVC-Aircraft dataset includes 10K aircraft images from 102 classes; the Caltech-UCSD Birds-200-2011 dataset includes 12K bird images from 200 classes; and the Stanford Cars dataset includes 16K car images from 196 classes. These datasets are publicly available on their official websites. Each dataset was split into training, validation, and test sets with a ratio of 70:15:15. Unless specified otherwise, all experiments were conducted at a resolution of $224^2$ using standard data augmentation, including random resized cropping to 256 pixels, random rotations within 15 degrees, color jitter with a factor of 0.4, random horizontal flip with a probability of 0.5, center cropping with 224-pixel windows, and mean-std normalization based on ImageNet statistics.

For training, stochastic gradient descent with a momentum of 0.9, learning rate of 0.01, cosine annealing schedule with 200 iterations \citep{DBLP:conf/iclr/LoshchilovH17}, weight decay of $10^{-2}$, and mini-batch size of 128 were used. These hyperparameters were determined based on the accuracy of the validation set. One exception was made for experiments with larger resolutions ranging from $224^2$ to $896^2$, where we used mini-batch size of 64 to adjust GPU memory, while other hyperparameters are the same. The model with the highest validation accuracy was obtained after 200 training epochs, and we reported accuracy on the validation and test sets. The ResNets were trained from scratch to solely focus on the architectural difference. The training was conducted on a single GPU machine. An average and standard deviation of five runs with different random seeds were reported for each result.

For ResNet, we used five types with the following architectures:
\begin{itemize}
	\item Original ResNet: $7\times7$ stem with a width = 64 with single-layer, strided convolution in downsampling.
	\item ResNet-C: 3-layer $3\times3$ stem with a width = 32 (32, 32, 64), strided convolution in downsampling.
	\item ResNet-D: 3-layer $3\times3$ stem with a width = 32 (32, 32, 64), average pool in downsampling.
	\item ResNet-S: 3-layer $3\times3$ stem with a width = 64 (64, 64, 128), strided convolution in downsampling.
	\item ResNet-T: 3-layer $3\times3$ stem with a width = 32 (24, 48, 64), average pool in downsampling.
\end{itemize}

\paragraph{CLIP Experiments} For the CLIP experiments, we used pretrained weights for both supervised ViTs and CLIP ViTs. When performing fine-tuning experiments, we used a learning rate of 0.001 and a weight decay of $2 \times 10^{-4}$, while keeping all other hyperparameters the same as in the above setup in ResNet.

\paragraph{ImageNet-1K Training} The ImageNet-1K dataset contains 1.28M images for 1,000 classes. We referred to the hyperparameter recipe described in the official documentation and the recipe from DeiT \citep{DBLP:conf/icml/TouvronCDMSJ21}. For training, the AdamW optimizer \citep{DBLP:conf/iclr/LoshchilovH19} with learning rate $5 \times 10^{-4}$, epochs 400, warm-up learning rate $10^{-6}$, cosine annealing schedule \citep{DBLP:conf/iclr/LoshchilovH17}, weight decay 0.05, label smoothing \citep{DBLP:conf/cvpr/SzegedyVISW16} 0.1, RandAugment \citep{DBLP:conf/nips/CubukZS020} of magnitude 9 and noise-std 0.5 with increased severity (rand-m9-mstd0.5-inc1), random erasing \citep{DBLP:conf/aaai/Zhong0KL020} with probability 0.25, Cutmix \citep{DBLP:conf/iccv/YunHCOYC19} 1.0, stochastic depth \citep{DBLP:conf/eccv/HuangSLSW16} 0.1, mini-batch size 128 per GPU, Exponential Moving Average of model weights with decay factor 0.99996, and image resolution $224^2$ were used. The training was performed on a 4$\times$A100 GPU machine, which required two to three days per training.

\paragraph{Mean-Std Constants} Note that pretrained models may have been trained by any of the normalization constants; our choice of mean-std constants was applied on evaluation or fine-tuning of pretrained models. For training our own models, mean-std constants were applied during both the training and test phases. The exact values are as follows:
\begin{verbatim}
OPENAI_CLIP_MEAN = (0.48145466, 0.4578275, 0.40821073)
OPENAI_CLIP_STD = (0.26862954, 0.26130258, 0.27577711)
IMAGENET_INCEPTION_MEAN = (0.5, 0.5, 0.5)
IMAGENET_INCEPTION_STD = (0.5, 0.5, 0.5)
IMAGENET_DEFAULT_MEAN = (0.485, 0.456, 0.406)
IMAGENET_DEFAULT_STD = (0.229, 0.224, 0.225)
\end{verbatim}

\section{List of Notations}
\label{app:notation}

\begin{table}[h!]
	\caption{Kernel and resolution-related notations.}
	\label{tab:notation-core}
	\centering
	\small
	\begin{tabular}{ll}
		\toprule
		\textbf{Symbol}                                                             & \textbf{Description}                                                   \\
		\midrule
		$x \in [0,1]^{C\times H\times W}$                                           & Input image with $C$ channels, height $H$, width $W$.                  \\
		$\eta \sim \mathcal N(0,\sigma^2 I)$                                        & Additive i.i.d. Gaussian noise with per-pixel std $\sigma$.            \\
		$I, I_n$                                                                    & Identity matrix of appropriate size; $I_n\in\mathbb{R}^{n\times n}$.   \\
		$*$                                                                         & 2D discrete convolution.                                               \\
		$\widehat{u}$                                                               & DFT of $u$ on the grid $\Omega$.                                       \\
		$\Omega$                                                                    & DFT grid.                                                              \\
		$\varepsilon$                                                               & Infrared cutoff $\varepsilon=2\pi/\max\{H,W\}$.                        \\
		$K_k \in \mathbb{R}^{k\times k}$                                            & Stem kernel of side length $k$; $\widehat{K}_k$ denotes its DFT.       \\
		$\phi_k(r)=(1+\beta k r)^{-1-\delta}$                                       & Radial low-pass envelope upper-bounding $|\widehat{K}_k(\omega)|$.     \\
		$\beta, \delta$                                                             & Positive envelope constants.                                           \\
		$\gamma(k)=\dfrac{\mathbb{E}\|K_k*\eta\|_2^2}{\sigma^2 HW}$                 & Per-pixel noise gain of the stem; equals $\|K_k\|_F^2$.                \\
		$s\ge 1$                                                                    & Downsampling factor.                                                   \\
		$g(s)$                                                                      & Anti-alias filter size before downsampling; $c_1 s \le g(s)\le c_2 s$. \\
		$c_1, c_2$                                                                  & Absolute positive constants, independent of $s$.                       \\
		$D_s = (\Downarrow_s)\circ K_{g(s)}$                                        & Anti-aliased downsampling: Filter then downsample by $s$.              \\
		$\Downarrow_s$                                                              & Downsampling by a factor $s$ along height and width.                   \\
		$\gamma_{\downarrow}(s)=\dfrac{\mathbb{E}\|D_s\eta\|_2^2}{\sigma^2 HW/s^2}$ & Per-output-pixel noise gain after downsampling.                        \\
		$\mathbb{E}[\cdot], \mathrm{Var}[\cdot]$                                    & Expectation and variance.                                              \\
		$C, C'$                                                                     & Absolute constants independent of $k$ and $s$ in the bounds.           \\
		$\|\cdot\|_2, \|\cdot\|_\infty, \|\cdot\|_F$                                & Euclidean, sup, and Frobenius norms.                                   \\
		\bottomrule
	\end{tabular}
\end{table}

\begin{table}[h!]
	\caption{Pooling and CLIP-related notations.}
	\label{tab:notation-pool-clip}
	\centering
	\small
	\resizebox{\textwidth}{!}{
		\begin{tabular}{ll}
			\toprule
			\textbf{Symbol}                                                       & \textbf{Description}                                                                                     \\
			\midrule
			$w, m=w^2$                                                            & Pooling window side length and number of elements.                                                       \\
			$S=(S_1,\dots,S_m)$                                                   & Clean activations in one pooling window; $S_{(j)}$ denotes the $j$-th order statistic.                   \\
			$X_{\mathrm{avg}}=\frac{1}{m}\sum_{i=1}^m(S_i+\eta_i)$                & Average-pooled noisy activation.                                                                         \\
			$X_{\mathrm{max}}=\max_{1\le i\le m}(S_i+\eta_i)$                     & Max-pooled noisy activation.                                                                             \\
			$S_{\mathrm{avg}}=\frac{1}{m}\sum_i S_i, S_{\mathrm{max}}=\max_i S_i$ & Clean pooled activations.                                                                                \\
			$\delta_{\mathrm{avg}}=X_{\mathrm{avg}}-S_{\mathrm{avg}}$             & Avg-pool error; $\mathbb{E}[\delta_{\mathrm{avg}}]=0$, $\mathrm{Var}[\delta_{\mathrm{avg}}]=\sigma^2/m$. \\
			$\delta_{\mathrm{max}}=X_{\mathrm{max}}-S_{\mathrm{max}}$             & Max-pool error.                                                                                          \\
			$T_{\mathrm{avg}}, T_{\mathrm{max}}$                                  & Pooling maps on a window for average and max.                                                            \\
			$\|T\|_{\ell_2\to\ell_2}$                                             & Lipschitz constant in $\ell_2$; $\|T_{\mathrm{avg}}\|=m^{-1/2}$, $\|T_{\mathrm{max}}\|\le 1$.            \\
			$\Delta=S_{(1)}-S_{(2)}$                                              & Gap between the largest and second-largest clean entries.                                                \\
			$Z_i\stackrel{\mathrm{i.i.d.}}{\sim}\mathcal N(0,1)$                  & Standard normals; $M_m=\max_i Z_i$, $A_m=\max_i |Z_i|$.                                                  \\
			$\mu\in\mathbb{R}^C, \boldsymbol\sigma\in\mathbb{R}_{>0}^C$           & Per-channel mean and std for input normalization.                                                        \\
			$N_{\mu,\boldsymbol\sigma}(x)=(x-\mu)/\boldsymbol\sigma$              & Channel-wise normalization.                                                                              \\
			$f$                                                                   & Vision backbone operating on normalized inputs.                                                          \\
			$z=N_{\mu,\boldsymbol\sigma}(x)$                                      & Normalized input.                                                                                        \\
			$L_z$                                                                 & Global $\ell_2$-Lipschitz constant of $f$ on its domain.                                                 \\
			$F_{\mu,\boldsymbol\sigma}=f\circ N_{\mu,\boldsymbol\sigma}$          & End-to-end map; $\|F_{\mu,\boldsymbol\sigma}\|_{\mathrm{Lip}}\le L_z/\sigma_{\min}$.                     \\
			$\sigma_{\min}=\min_c \boldsymbol\sigma_c$                            & Smallest channel std in normalization.                                                                   \\
			\bottomrule
		\end{tabular}}
\end{table}

\begin{table}[h!]
	\caption{Rank difference-related notations.}
	\label{tab:notation-ranking}
	\centering
	\small
	\resizebox{\textwidth}{!}{
		\begin{tabular}{ll}
			\toprule
			\textbf{Symbol}                                                      & \textbf{Description}                                                           \\
			\midrule
			$\tau\ge 0$                                                          & Noise severity level.                                                          \\
			$A_i(\tau)$                                                          & Accuracy of model $i$ at severity $\tau$; $p_i=A_i(0)$ denotes clean accuracy. \\
			$\rho_i$                                                             & First-order accuracy slope with respect to severity.                           \\
			$L_i, \tau_0$                                                        & Curvature bound and validity radius for the local model.                       \\
			$\mathrm{rank}_\tau(i)$                                              & Rank of model $i$ by accuracy at severity $\tau$.                              \\
			$\mathrm{RankDiff}_i(\tau)=\mathrm{rank}_\tau(i)-\mathrm{rank}_0(i)$ & Rank change.                                                                   \\
			$\tau^\star_{ij}=\dfrac{p_i-p_j}{\rho_i-\rho_j}$                     & First-order crossing severity of models $i$ and $j$.                           \\
			$\mathcal{A}_i(\tau)$                                                & Set of $j$ whose ordering with $i$ is ambiguous at $\tau$.                     \\
			$\operatorname{sgn}(\cdot), \mathbf{1}\{\cdot\}$                     & Sign and indicator functions.                                                  \\
			\bottomrule
		\end{tabular}}
\end{table}

\end{document}